\pgfplotsset{compat=newest}
\newcommand{\normm}[1]{ \left\|#1\right\| }
\newcommand{\bbR}{\mathbb{R}}
\newcommand{\bA}{\mathbf{A}}
\newcommand{\bu}{\mathbf{u}}
\newcommand{\bv}{\mathbf{v}}
\newcommand{\bx}{\mathbf{x}}
\newcommand{\by}{\mathbf{y}}
\newtheorem{remark}{Remark}
\newtheorem{counterexample}{Counterexample}
\def\BState{\State\hskip-\ALG@thistlm}
\title{HPPP: Halpern-type Preconditioned Proximal Point Algorithms and Applications to Image Restoration\thanks{\textbf{\color{header1}Funding:} This work was supported by the National Key Research and Development Program (2020YFA0713504) and the National Natural Science Foundation of China (grants 12471300 and 12471401).}}
\author{Shuchang  Zhang\thanks{Department of Mathematics, National University of Defense Technology, Changsha, 410073, China (\email{zhangshuchang19@nudt.edu.cn}).}
\and Hui Zhang\thanks{Department of Mathematics, National University of Defense Technology, Changsha, 410073, China (\email{h.zhang1984@163.com}).}
\and Hongxia Wang\thanks{Corresponding author. Department of Mathematics, National University of Defense Technology, Changsha, 410073, China (\email{wanghongxia@nudt.edu.cn}).}}
\begin{document}

\maketitle
% REQUIRED
\begin{abstract}
	Recently, the degenerate preconditioned proximal point (PPP) method provided a unified and flexible framework for designing and analyzing operator-splitting algorithms, such as the Douglas-Rachford (DR) splitting. However, the degenerate PPP method exhibits weak convergence in the infinite-dimensional Hilbert space and lacks accelerated variants. To address these issues, we propose a Halpern-type PPP (HPPP) algorithm, which leverages the strong convergence and acceleration properties of Halpern's iteration method. Moreover, we propose a novel algorithm for image restoration by combining HPPP with denoiser priors, such as the Plug-and-Play (PnP) prior, which can be viewed as an accelerated PnP method. Finally, numerical experiments, including several toy examples and image restoration, validate the effectiveness of our proposed algorithms.
\end{abstract}
% REQUIRED
\begin{keywords}
Halpern iteration, preconditioned proximal point algorithms, Plug-and-Play Prior, regularization by denoising, image restoration
\end{keywords}

\section{Introduction}
Image restoration (IR) problems, including image deblurring, super-resolution, and inpainting, can be formulated as the following optimization problem~\cite{Chambolle2011,He2012}:
\begin{equation}\label{problem: optimization problem for image restoration}
\min_{\bx\in\mathcal{X}} \lambda f(\bx)+g(\mathbf{K}\bx),
\end{equation}
where $ f: \mathcal{X} \to \mathbb{R} \cup \{+\infty\}$ and $ g: \mathcal{Y} \to \mathbb{R} \cup \{+\infty\}$ are convex, lower semicontinuous functions, $ \mathbf{K}: \mathcal{X} \to \mathcal{Y}$ is a bounded linear operator, and $ \lambda > 0 $ is a balance parameter. Both $ \mathcal{X} $ and $ \mathcal{Y} $ are real Hilbert spaces. The first term $ f $ represents the data fidelity, while the second term $ g $ serves as a regularization, such as TV (total variation)~\cite{RUDIN1992259}, to mitigate the ill-posedness of IR problems. 

By the first-order optimality condition, the convex optimization problem~\eqref{problem: optimization problem for image restoration} is equivalent to the following inclusion problem:
\begin{equation}\label{original inclusion problem}
\text{find } \bx \in \mathcal{X} \text{ such that } \mathbf{0} \in \lambda \partial f(\bx) + \mathbf{K}^* \partial g(\mathbf{K}\bx),
\end{equation}
where $ \partial f (\bx) $ and $ \partial g (\bx) $ are the subdifferentials of $ f $ and $ g $ at $ \bx $, respectively~\cite[Chapter 3]{beck2017first}. Following~\cite{He2012,Bredies2022}, by introducing an auxiliary variable $\by \in \partial g(\mathbf{K}\bx) $, we reformulate~\eqref{original inclusion problem} as
\begin{equation}\label{problem: inclusion}
\text{find } \bu \in \mathcal{H} \text{ such that } \mathbf{0} \in \mathcal{A} \bu,
\end{equation}
where $\mathcal{A} = \begin{pmatrix}
\lambda \partial f & \mathbf{K}^* \\
- \mathbf{K} & (\partial g)^{-1}
\end{pmatrix}$, $\bu = (\bx, \by)$, and $\mathcal{H} = \mathcal{X} \times \mathcal{Y}$. The problem~\eqref{problem: inclusion} is common in modern optimization and variational analysis~\cite{BCombettes,Bauschke2023}.
When $\mathcal{A}$ is maximal monotone, the resolvent $J_{\mathcal{A}} = (I + \mathcal{A})^{-1}$ is nonexpansive with a full domain, as established by the Minty surjectivity theorem~\cite{Minty1962MonotoneO}. The proximal point iteration $\mathbf{u}^{k+1} = (I + \mathcal{A})^{-1}\mathbf{u}^k$ is used to solve~\eqref{problem: inclusion} and converges weakly~\cite{Rockafellar1976}. However, it is difficult to compute the operator $(I + \mathcal{A})^{-1}$, and hence splitting methods have been developed to address this issue. The well-known Douglas-Rachford splitting (DRS)~\cite{Eckstein1992} decomposes $\mathcal{A}$ into the sum of two maximal monotone operators $\mathcal{A}_1$ and $\mathcal{A}_2$, for which $J_{\mathcal{A}_1}$ and $J_{\mathcal{A}_2}$ are easier to obtain. Another way to solve~\eqref{problem: optimization problem for image restoration} is to transform it into a saddle-point problem \cite{Chambolle2011,Pock2011}, i.e.,
\begin{equation}\label{problem: saddle-point problem}
\min_{\bx \in \mathcal{X}} \max_{\by \in \mathcal{Y}} \left\langle \mathbf{K}\bx, \by \right\rangle + \lambda f(\bx) - g^*(\by),
\end{equation}
% \vspace{-2em}
where $g^*: \mathcal{Y} \to \mathbb{R} \cup \{+\infty\}$ is the conjugate of $g$. In this direction, many primal-dual methods, including the well-known Chambolle-Pock (CP) primal-dual method~\cite{Chambolle2011,Pock2011} and the primal-dual hybrid gradient (PDHG) method~\cite{He2012}, are designed and studied extensively. In particular, He and Yuan~\cite{He2012} analyzed the PDHG method from a PPP standpoint with a positive definite preconditioner $\mathcal{M}: \mathcal{H}\to\mathcal{H}$. Recently, Bredies et al.~\cite{Bredies2015, Bredies2017, Bredies2022} developed a unified degenerate PPP algorithmic framework with a positive semidefinite $\mathcal{M}$. By choosing appropriate preconditioners, this framework could cover the DR and CP algorithms~\cite{Bredies2022}.

It is crucial to study algorithms that exhibit both strong convergence and acceleration~\cite{2002Iterative,BOT2021,Izuchukwu2023,Bauschke2023,Sun2024,chen2024hprlpimplementationhprmethod,lu2024restartedhalpernpdhglinear}, such as strongly convergent proximal point methods~\cite{2002Iterative} and forward-reflected-backward splitting algorithms~\cite{Izuchukwu2023}. For example, Bauschke et al. demonstrated strong convergence of degenerate PPP under the special case where $\mathcal{A}$ is linear~\cite{Bauschke2023}. In such cases, the weak limit of the PPP sequence corresponds to the $\mathcal{M}$-projection of the initial point. Moreover, Sun et al.~\cite{Sun2024} proposed an accelerated preconditioned alternating direction method of multipliers (pADMM) by leveraging the degenerate PPP method~\cite{Bredies2022} and the fast Krasnosel'ski\u{ı}-Mann (KM) iteration~\cite{fastKM2023}. Similarly, Chen et al. introduced an accelerated HPR-LP solver, which implements a Halpern Peaceman-Rachford method enhanced with semiproximal terms for efficiently solving linear programming (LP) problems~\cite{chen2024hprlpimplementationhprmethod}, and the restarted Halpern PDHG (rHPDHG) achieves an accelerated linear convergence rate~\cite{lu2024restartedhalpernpdhglinear}. A shortcoming of forward-backward and DRS algorithms employing KM iteration is that their iterates converge only in the weak topology~\cite{BOT2021}. Consequently, the degenerate PPP method based on KM iteration typically exhibits weak convergence in Hilbert spaces. Meanwhile, it is unclear whether the degenerate PPP method can be accelerated via simple modification.

The classic Halpern iteration~\cite{halpern1967fixed} offers the advantage of strong convergence over the KM iteration in infinite-dimensional Hilbert spaces, with the limit identified as the metric projection of the anchor onto the fixed point set~\cite{he2024convergence}. Then the Halpern iteration is also known as an implicitly regularized method~\cite{Diakonikolas2020}. Due to this implicit regularity, the degenerate PPP method incorporating Halpern iteration can obtain a unique solution, yielding stable recovery results for IR problems. Achieving stable reconstruction for ill-posed inverse problems is important~\cite{engl1996regularization}. Moreover, beyond the implicit regularity, Halpern iteration can also accelerate convergence rate in terms of operator residual norm~\cite{TranDinh2024}, which is widely utilized in machine learning~\cite{Diakonikolas2020,Park2022,he2024convergence}. Notably, the work~\cite{Park2022} has demonstrated that PDHG with restarted Halpern iteration achieves a faster convergence rate for function values in CT image reconstruction. Based on the Halpern iteration, the following Halpern-type preconditioned proximal point algorithm (called HPPP)~\eqref{eq: HPPP} is thus proposed to overcome the two limitations of the degenerate PPP method:
\begin{equation}\label{eq: HPPP}
\bu^{k+1}=\mu_{k+1}\mathbf{a}+(1-\mu_{k+1})\mathcal{T}\bu^k,
\end{equation}
where $\mathcal{T}= (\mathcal{A}+\mathcal{M})^{-1}\mathcal{M}$, and $\mathcal{A},\mathcal{M}$ are detailed in subsection~\ref{subsec:GraREDHP3}, and $ \mathbf{a}, \bu^0 \in \mathcal{H}$ are the anchor point and the initial point, respectively, and $ \{\mu_k \}_{k\in \mathbb{N}}$ is a sequence in $[0,1]$ such that $ \sum_{k\in \mathbb{N}} \mu_k = +\infty, \lim_{k\to \infty}\mu_k =  0$. 

PnP (Plug-and-Play) methods that combine splitting algorithms with denoiser priors have been widely applied in practical problems~\cite{Venkatakrishnan2013,PnP2015,PnP2020forMRI,PnP-CI2023,There-Operator-Splitting2024} and have achieved state-of-the-art performance in inverse imaging tasks~\cite{zhang2021plug,hurault2022proximal,Tan2024}. Buzzard et al. provided a consensus equilibrium interpretation on denoiser priors~\cite{Buzzard2018}. Romano , Elad, and Milanfar introduced RED (regularization by denoising)~\cite{Romano2017}, whose gradient exactly corresponds to the denoising residual, thereby yielding a clear objective function~\cite{cohen2021regularization} that can be exploited in first-order optimization. Powerful denoisers such as denoising convolutional neural network (DnCNN)~\cite{Zhang2017} typically do not meet the conditions of RED~\cite{REDClarifications2019}. To address this limitation, Reehorst and Schniter introduced the score-matching by denoising (SMD) perspective to interpret RED~\cite{REDClarifications2019}. Based on the fixed-point projection, Cohen et al. proposed the RED-PRO model~\cite{cohen2021regularization} as a bridge between RED and PnP, while the hybrid steepest descent (HSD) method~\cite{yamada2001hybrid,Yamada2005} was employed to solve the resulting model. Meanwhile, Ryu et al. proved the convergence of PnP-FBS (forward-backward splitting) and PnP-ADMM using the Banach contraction principle under the assumption that the data term $f$ is strongly convex and that the residual is nonexpansive~\cite{ryu2019plug}. However, enforcing strong convexity on $f$ precludes many IR tasks~\cite{hurault2022proximal}. Furthermore, Cohen et al.  parameterized denoisers via the gradients of smooth potential functions that satisfy a symmetric Jacobian property~\cite{NEURIPS2021_97108695}, and Hurault, Leclaire, and Papadakis developed the gradient step (GS) denoiser~\cite{hurault2022gradient,hurault2022proximal}, which can be interpreted as a proximal operator of an implicit regularizer~\cite{hurault2022proximal}. Although the convergence of PnP-ADMM is commonly analyzed from the perspective of Douglas-Rachford splitting (DRS) based on the equivalence between ADMM and DRS~\cite{ryu2019plug,hurault2022proximal}, theoretically achieving fast fixed-point residual decay in PnP-DRS remains unknown.

Based on HPPP and PnP priors, we propose the gradient regularization by denoising via HPPP called GraRED-HP$^3$ (Algorithm~\ref{alg: GraRED-HP3})~\footnote{Since the RED gradient exactly equals the residual, we adopt the \texttt{GraRED} notation—derived from Moreau decomposition—to represent the proximal operator of the implicit regularizer associated with the residual.}. Our main contributions are as follows:
\begin{enumerate}
  \item \textbf{Theoretical contributions.} The sequence $\{\mathbf{u}^k\}_{k\in \mathbb{N}}$ generated by HPPP with the positive semidefinite preconditioner $\mathcal{M}$ converges strongly to a unique solution  $ \mathbf{u}^* = \arg\min_{\mathbf{u}\in \mathrm{Fix}(\mathcal{T})}\|\mathbf{u}-\mathbf{a} \|_\mathcal{M}^2$, as stated in Theorem~\ref{thm:bigthm} and Proposition~\ref{cor:bigthm cor}, while the original PPP method can only converge weakly to some uncertain solution. Moreover, compared with the asymptotic regularity result of PPP~\cite[Lemma~2.8]{Bredies2022} (i.e., $\lim_{k\to\infty}\|\mathcal{T}\mathbf{u}^k-\mathbf{u}^k\|_\mathcal{M} = 0$), we establish a convergence rate of $\mathcal{O}(1/k)$ for both $\|\mathcal{T}\mathbf{u}^k-\mathbf{u}^k\|$ and $\|\mathcal{T}\mathbf{u}^k-\mathbf{u}^k\|_\mathcal{M}$ (see Propositions~\ref{cor: convergence rate of HPPP} and~\ref{cor: accelerated convergence rate of HPPP}).
  
  \item \textbf{Algorithmic development.} We integrate HPPP with denoiser priors to propose the GraRED-HP$^3$ algorithm for IR problems and shed new theoretical insights on denoiser priors from a PPP standpoint. Noting that the PnP-ADMM algorithm can be reformulated in an equivalent DRS form~\cite{ryu2019plug,hurault2022proximal}, and given that the worst-case convergence rate for DRS is $\mathcal{O}(1/\sqrt{k})$~\cite{He2015}, GraRED-HP$^3$ improved a sublinear convergence rate (as per Proposition~\ref{cor: accelerated convergence rate of HPPP}) and thereby serves as an accelerated PnP method.
  
  \item \textbf{Experimental validation.} We numerically validate the advantages of HPPP, including its implicit regularity, acceleration, and efficiency, with several toy examples. In addition, we demonstrate the state-of-the-art performance of GraRED-HP$^3$ and its restarted variant with advanced denoisers in IR tasks.
\end{enumerate}

The rest of this paper is organized as follows. In section~\ref{section: Preliminaries}, we review some preliminaries for convergence analysis. In section~\ref{section: Main results}, we analyze the convergence of HPPP. Combining HPPP and denoiser priors, we propose GraRED-HP$^3$ for IR problems. In section~\ref{sec:experiments}, we verify the advantages of HPPP with several toy examples. Furthermore, we validate the performance of GraRED-HP$^3$ through IR experiments. Finally, conclusions are presented in section~\ref{sec:conclusions}.
\section{Preliminaries}
\label{section: Preliminaries}
In this section, we provide some fundamental concepts related to the degenerate PPP method and denoiser priors. Let $ \mathcal{H} $  be a real Hilbert space with inner product $ \langle \cdot,\cdot\rangle $ and corresponding induced norm $ \normm{\cdot} $, and let $ \mathcal{A}:  \mathcal{H} \to 2^{\mathcal{H}}$ be a (maybe multivalued) operator (that through the rest of this paper we often identify with its graph in $\mathcal{H}\times \mathcal{H}$). 
\subsection{Preconditioned proximal point}
Bredies et al.~\cite{Bredies2022} introduced the degenerate PPP framework with the positive semidefinite preconditioner $ \mathcal{M}: \mathcal{H}\to \mathcal{H} $. The proper preconditioner $ \mathcal{M}$ can make $ \mathcal{A}+\mathcal{M} $ have a lower triangular structure, which conveniently calculates the inverse $ (\mathcal{A}+\mathcal{M})^{-1}$. 
\begin{definition}
An admissible preconditioner for the operator $\mathcal{A}: \mathcal{H}\to 2^\mathcal{H}$ is a bounded, linear, self-adjoint, and positive semidefinite operator $\mathcal{M}:\mathcal{H}\to \mathcal{H}$ such that
\begin{displaymath}
\mathcal{T}=(\mathcal{M}+\mathcal{A})^{-1}\mathcal{M}
\end{displaymath}
is single-valued and has full domain.
\end{definition}

Therefore, the PPP iteration is written as
\begin{equation}\label{PPP iteration}
\bu^0\in \mathcal{H},\bu^{k+1} = \mathcal{T}\bu^k = (\mathcal{M}+\mathcal{A})^{-1}\mathcal{M}\bu^k.
\end{equation}
If $ \mathcal{M} =I $, then $ \mathcal{T} $ is a firmly nonexpansive (FNE) operator~\cite[Proposition 23.8]{BCombettes} and~\eqref{PPP iteration} becomes the standard proximal point iteration. If $ \mathcal{M} $  is positive semidefinite, the operator $ \mathcal{T} $ is related with the degenerate $\mathcal{M}$-firmly nonexpansive ($\mathcal{M}$-FNE) operator~\cite{Bredies2015,BUI2020124315,Xue2022,Bredies2022}, which is associated with seminorm $ \normm{\bu}_\mathcal{M} = \sqrt{\langle \mathcal{M}\bu, \bu\rangle} $ and semi inner-product $ \langle \bu,\bv\rangle_\mathcal{M} = \langle \mathcal{M}\bu,\bv\rangle $. The following notion extends monotone characteristics, i.e., $ \mathcal{M}$-monotonicity.
\begin{definition}[$\mathcal{M}$-monotonicity]
Let $\mathcal{M}:\mathcal{H}\to \mathcal{H}$ be a bounded linear positive semidefinite operator; then $\mathcal{B}:\mathcal{H}\to 2^\mathcal{H}$ is $\mathcal{M}$-monotone if we have
\[\langle \bv-\bv', \bu-\bu'\rangle _\mathcal{M}\geq 0\quad \forall (\bu, \bv), (\bu',\bv')\in \mathcal{B}.\]
\end{definition}

The following lemma demonstrates that the firmly nonexpansive notation can be generalized to the degenerate case~\cite{Bredies2015,BUI2020124315}. If $\mathcal{M}$ is an admissible preconditioner and $\mathcal{M}^{-1}\mathcal{A}$ is $\mathcal{M}$-monotone, then the operator $\mathcal{T}$ is $\mathcal{M}$-FNE.
\begin{lemma}[\cite{Bredies2022}]\label{lemma: MFNE}
	Let $ \mathcal{A}:\mathcal{H}\to 2^\mathcal{H}$ be an operator with $\mathrm{zer} \mathcal{A} \neq \emptyset$, and let $\mathcal{M}$ be an admissible preconditioner such that $\mathcal{M}^{-1}\mathcal{A}$ is $\mathcal{M}$-monotone. Then $\mathcal{T}$ is $\mathcal{M}$-FNE, i.e.,
	\begin{equation}\label{eq: MFNE}
		\left\|\mathcal{T}\bu-\mathcal{T}\bv\right\|_{\mathcal{M}}^2+\left\|(I-\mathcal{T})\bu-(I-\mathcal{T})\bv\right\|^2_{\mathcal{M}}\leq \left\|\bu-\bv\right\|^2_{\mathcal{M}}.
	\end{equation}
\end{lemma}
 
By the KM iteration, the degenerate PPP method is written as
\begin{equation}\label{eq: PPP}
\bu^{k+1}=(1-\lambda_k)\bu^k + \lambda_k \mathcal{T}\bu^k,
\end{equation}
where $\lambda_k$ is a sequence in $[0,2]$ such that $\sum_{k\in \mathbb{N}}\lambda_k (2-\lambda_k) = +\infty$.
\subsection{PnP prior and RED}
\subsubsection{PnP prior}
Venkatakrishnan, Bouman, and Wohlberg proposed the first PnP method based on ADMM~\cite{Venkatakrishnan2013}. PnP methods replace the proximal mapping of the implicit regularizer defined by~\eqref{def:prox} with a denoiser $D_\sigma:\mathbb{R}^n\to\mathbb{R}^n$. PnP-ADMM (Algorithm~\ref{alg:pnp-admm}) is well known for its fast empirical convergence and efficiency in computational imaging~\cite{PnP-CI2023,wu2024principled}. Sreehari et al. established theoretical conditions for PnP-ADMM, requiring that the Jacobian $\nabla D_\sigma$ be a doubly stochastic and symmetric matrix with all real eigenvalues in the range $(0,1]$~\cite{PnP2015}. When the denoiser meets the necessary and sufficient conditions of Proposition~\ref{prop: characterization of proximal operators}, it is a proximal mapping of the implicit regularizer $\phi:\mathbb{R}^n\to\mathbb{R} \cup \{+\infty\}$. Subsequently, a nonlocal means denoiser satisfying these conditions was constructed~\cite{PnP2015}.
\begin{algorithm}[ht]
	\caption{PnP-ADMM }
	\label{alg:pnp-admm}
	\begin{algorithmic}[1]
		\State \textbf{Input:} Given $\mathbf{u}^0 = \mathbf{0},\mathbf{x}^0,\mathbf{z}^0$ and total iterations $N>0$ 
		\For{$k = 0, 1, 2, \ldots, N-1$}
		\State $\mathbf{x}^{k+1} = D_\sigma(\mathbf{z}^{k} -\mathbf{u}^{k})$
		\State $\mathbf{z}^{k+1} = \text{prox}_{\lambda f}(\mathbf{x}^{k+1} + \mathbf{u}^{k})$
		\State $\mathbf{u}^{k+1} = \mathbf{u}^{k} + (\mathbf{x}^{k+1} - \mathbf{z}^{k+1})$
		\EndFor
		\State \textbf{Output:} $\bx^N$
	\end{algorithmic}
\end{algorithm}
\begin{definition}[\cite{beck2017first}]\label{def: proximal operator}
	The proximal operator of $  \phi:\bbR^n\to \mathbb{R} \cup \{+\infty\} $ is defined by
	\begin{equation}\label{def:prox}
	\mathrm{prox}_{\phi} (\mathbf{x})= \arg\min_{\mathbf{u}\in\bbR^n} \{\frac 12\normm{\mathbf{u}-\mathbf{x}}^2+\phi(\mathbf{u})\}.
	\end{equation}
\end{definition}
\begin{proposition}[\cite{moreau1965proximite,Gribonval2020}]\label{prop: characterization of proximal operators}
	A function $ h:\bbR^n\to \bbR^n$ defined everywhere is the proximal operator of a proper convex lower semicontinuous function $ \phi:\bbR^n\to \mathbb{R} \cup \{+\infty\} $  if and only if the following conditions hold jointly:
	\begin{enumerate}[label={(\roman*)},leftmargin=1cm]
	\item[(a)] There exists a convex lower semicontinuous function $ \psi $ such that for each $ \mathbf{x}\in \bbR^n, h(\mathbf{x}) = \nabla \psi(\mathbf{x})$;
	\item[(b)]$ h $ is nonexpansive, i.e.,
	\[\normm{h(\mathbf{x})-h(\mathbf{y})}\leq \normm{\mathbf{x}-\mathbf{y}}\quad \forall \mathbf{x},\mathbf{y}\in \bbR^n.\]
	\end{enumerate}
	\end{proposition}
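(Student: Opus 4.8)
The plan is to prove the two implications separately, in each direction routing through the Moreau--Fenchel conjugacy that ties a proximal map to the gradient of a smooth convex function.

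\emph{Necessity.} Assume $h=\mathrm{prox}_{\phi}$ for some proper convex l.s.c.\ $\phi$. Condition (b) is immediate, since $\mathrm{prox}_{\phi}$ is firmly nonexpansive and hence nonexpansive. For (a) I would set $\psi:=\bigl(\phi+\tfrac12\normm{\cdot}^2\bigr)^{*}$. The function $\phi+\tfrac12\normm{\cdot}^2$ is proper, convex, l.s.c., supercoercive and $1$-strongly convex, so its conjugate $\psi$ is convex, l.s.c., finite and differentiable on all of $\bbR^n$, and for every $\mathbf{x}$ the maximizer defining $\psi(\mathbf{x})$ is unique and equals $\nabla\psi(\mathbf{x})=\arg\min_{\mathbf{u}}\bigl\{\tfrac12\normm{\mathbf{u}-\mathbf{x}}^2+\phi(\mathbf{u})\bigr\}=\mathrm{prox}_{\phi}(\mathbf{x})=h(\mathbf{x})$. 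This yields (a) with this $\psi$.

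\emph{Sufficiency.} Assume (a) and (b). Since $h=\nabla\psi$ is nonexpansive, $\psi$ is a convex function with $1$-Lipschitz gradient; define $\phi:=\psi^{*}-\tfrac12\normm{\cdot}^2$. First I would check that $\phi$ is proper, convex and l.s.c.: $\psi^{*}$ is proper, convex and l.s.c.\ (conjugate of the proper convex l.s.c.\ $\psi$), and the classical duality between smoothness and strong convexity---a convex function has $1$-Lipschitz gradient if and only if its conjugate minus $\tfrac12\normm{\cdot}^2$ is convex---upgrades this to convexity of $\phi$, while subtracting the continuous quadratic preserves properness and lower semicontinuity. Then I would compute, for arbitrary $\mathbf{x}$,
\[
\mathrm{prox}_{\phi}(\mathbf{x})=\arg\min_{\mathbf{u}}\Bigl\{\tfrac12\normm{\mathbf{u}-\mathbf{x}}^2+\psi^{*}(\mathbf{u})-\tfrac12\normm{\mathbf{u}}^2\Bigr\}=\arg\max_{\mathbf{u}}\bigl\{\langle\mathbf{u},\mathbf{x}\rangle-\psi^{*}(\mathbf{u})\bigr\}=\partial\psi^{**}(\mathbf{x}),
\]
and since $\psi$ is proper convex l.s.c.\ we have $\psi^{**}=\psi$, hence $\partial\psi(\mathbf{x})=\{\nabla\psi(\mathbf{x})\}=\{h(\mathbf{x})\}$; the left-hand $\arg\min$ is a single point because convexity of $\phi$ makes that objective $1$-strongly convex (and $\phi$ proper makes its value finite). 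Therefore $h=\mathrm{prox}_{\phi}$.

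\emph{Main obstacle.} The delicate point is the sufficiency direction: one must verify that $\phi=\psi^{*}-\tfrac12\normm{\cdot}^2$ is genuinely a \emph{convex}, proper, l.s.c.\ function, which is precisely where nonexpansiveness of $h$ (equivalently $1$-Lipschitzness of $\nabla\psi$) enters via the smoothness/strong-convexity duality, while the gradient structure (a) is what guarantees $\psi^{*}$, and hence $\phi$, is a bona fide function rather than leaving $h$ merely a resolvent of some general maximal monotone operator. One also has to track domains, properness and attainment of the (arg)minima carefully so that the Fenchel manipulations and the identity $\psi^{**}=\psi$ are legitimate. An alternative route for sufficiency is to invoke the Baillon--Haddad theorem to conclude $h$ is firmly nonexpansive, write $h=(I+A)^{-1}$ with $A=h^{-1}-I$ maximal monotone, and then use $h^{-1}=\partial\psi^{*}$ to identify $A=\partial\phi$; this reshuffles but does not eliminate the convexity check on $\phi$.
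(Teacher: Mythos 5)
The paper does not prove this proposition at all—it is quoted as a known result from Moreau and Gribonval--Nikolova—and your argument is a correct reconstruction of exactly the standard proof behind that citation: necessity via $\psi=\bigl(\phi+\tfrac12\normm{\cdot}^2\bigr)^{*}$ (whose gradient is $\mathrm{prox}_\phi$), sufficiency via $\phi=\psi^{*}-\tfrac12\normm{\cdot}^2$ together with the Lipschitz-gradient/strong-convexity conjugate duality and $\psi^{**}=\psi$. The only mismatch is with the paper's loose statement that $\phi$ maps into $\mathbb{R}^+$: your constructed $\phi$ is extended-real-valued and not necessarily nonnegative, which is how the cited references state the result, so this is a defect of the paper's formulation rather than of your proof.
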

\subsubsection{RED prior}
RED~\cite{Romano2017} is defined by
\begin{equation}\label{RED}
g_{\text{red}}(\mathbf{x}) = \frac 12 \langle \mathbf{x},\mathbf{x}-D_\sigma(\mathbf{x})\rangle,
\end{equation}
where $ D_\sigma:\bbR^n\to\bbR^n$ is a denoiser which is assumed to obey the following assumptions:
\begin{enumerate}[label={\bf (C\arabic*)}, ref={\bf C\arabic*}]
    \item \label{C1} Local homogeneity: For all $ \mathbf{x} \in \mathbb{R}^n $, $ D_\sigma \left( (1+\epsilon)x \right) = (1+\epsilon)D_\sigma(\mathbf{x}) $ for sufficiently small $ \epsilon > 0 $.
    \item \label{C2} Differentiability: The denoiser $ D_\sigma(\cdot) $ is differentiable.
    \item \label{C3} Jacobian symmetry~\cite{REDClarifications2019}: $ [\nabla D_\sigma(\mathbf{x})]^{\mathrm{T}} = \nabla D_\sigma(\mathbf{x})$ for all $ \mathbf{x} \in \mathbb{R}^n $.
    \item \label{C4} Strong passivity: The spectral radius of the Jacobian satisfies $ \eta \left( \nabla D_\sigma(\mathbf{x}) \right) \leq 1 $.
\end{enumerate}

If $ D_\sigma(\mathbf{x}) $ satisfies~\ref{C1},~\ref{C2}, and~\ref{C3}, then $ \nabla g_{\text{red}}(\mathbf{x}) = \mathbf{x}-D_\sigma(\mathbf{x}) = R(\mathbf{x}) $. Moreover, if the denoiser satisfies~\ref{C4}, then RED is convex and $D_\sigma$ is nonexpansive. 
\subsubsection{Implicit Gradient RED}
\label{subsection:GraRED}
The gradient of RED is exactly the denoising residual. We show that under ideal conditions and assuming the residual is nonexpansive, both the denoiser and its residual function are proximal operators of implicit regularizers. Even if all RED conditions are hard to satisfy, learning FNE operators can still achieve this.

The nonexpansive assumption has theoretically played a crucial role in PnP prior and RED. Ryu et al. proposed the following assumption to guarantee the convergence of PnP methods~\cite{ryu2019plug}: 
\begin{equation}\label{assumption: nonexpansive residual}
	\left\| (I-D_\sigma)(\mathbf{u})- (I-D_\sigma)(\mathbf{v}) \right\|\leq \varepsilon \left\| \mathbf{u}-\mathbf{v} \right\|;  \tag{A}
\end{equation}
for any $\mathbf{u},\mathbf{v}\in \mathbb{R}^n$ and $\varepsilon\leq 1$, the real spectral normalization is used to obtain nonexpansive residual. 
Under the conditions~\ref{C1},~\ref{C2},~\ref{C3}, and~\ref{C4} of RED and assumption~\eqref{assumption: nonexpansive residual}, we can further discuss the relationship between $\nabla g_{\text{red}}$ and the proximal operator, and we prove that there exists an implicit regularizer $ \phi $ such that $ D_\sigma(\mathbf{x}) = \mathrm{prox}_{\phi}(\mathbf{x}) $ and $R(\mathbf{x}) = \nabla g_{\text{red}}(\mathbf{x}) = \mathrm{prox}_{\phi^*}(\mathbf{x})$, where $\phi^*$ is the conjugate of $\phi$.

\begin{lemma}\label{lemma: gradient of RED}
Assume that a denoiser $ D_\sigma:\bbR^n\to\bbR^n $ satisfies conditions~\ref{C1},~\ref{C2},~\ref{C3},~\ref{C4} and assumption~\eqref{assumption: nonexpansive residual}; then there exists an implicit regularizer $\phi:\bbR^n\to \mathbb{R} \cup \{+\infty\} $ such that 
\begin{equation}
D_\sigma(\mathbf{x}) =\mathrm{prox}_{\phi}(\mathbf{x}),R(\mathbf{x}) = \mathrm{prox}_{\phi^*}(\mathbf{x}).
\end{equation}
\end{lemma}
\begin{proof}
By assumption conditions, the RED term $g_{\text{red}}$ is convex, and $\nabla g_{\text{red}} = I-D_\sigma = R$. Let $\psi(\mathbf{x}) = \frac{1}{2}\left\| \mathbf{x} \right\|^2-g_{\text{red}}$, then $D_\sigma = I- \nabla g_{\text{red}} = \nabla\psi$, which is a GS denoiser. Since $R$ is nonexpansive and $D_\sigma$ is differentiable, then $\eta(\nabla^2\psi) =\eta(\nabla D_\sigma) = \eta (I-\nabla R) \in [0,1]$, it follows that $\psi$ is convex. Applying Proposition~\ref{prop: characterization of proximal operators}, together with $h(\mathbf{x}) = D_\sigma(\mathbf{x}),\psi = \frac{1}{2}\left\| \mathbf{x} \right\|^2-g_{\text{red}}$ and Moreau decomposition~\cite[Theorem 6.44]{beck2017first}, i.e., $\mathrm{prox}_{\phi}(\mathbf{x}) +\mathrm{prox}_{\phi^*}(\mathbf{x}) =\mathbf{x}$, there exists a function $ \phi $ and its conjugate $\phi^*$ such that $ D_\sigma(\mathbf{x}) = \nabla \psi(\mathbf{x}) = \mathrm{prox}_{\phi}(\mathbf{x}) $, and $ R(\mathbf{x}) =\mathrm{prox}_{\phi^*}(\mathbf{x}) $, which completes the proof.
\end{proof}

Satisfying all RED conditions is challenging; in practice, only the nonexpansiveness assumption is required. From the perspective of monotone operator theory, Lemma~\ref{lemma: resolvent condtions}~\cite[Corollary 23.9]{BCombettes} provides a necessary and sufficient condition for a denoiser to be a proximal operator of an implicit regularizer.
\begin{lemma}\label{lemma: resolvent condtions}
	The mapping $T: \mathcal{X} \to \mathcal{X}$ is the resolvent of a maximal monotone operator if and only if $T$ is FNE.
\end{lemma}
Moreover, by~\cite[Proposition 4.4]{BCombettes}, the operator $T$ is FNE if and only if there exists a nonexpansive operator $S:\mathcal{X} \to \mathcal{X}$ such that $T =\frac{S+I}{2}$. Therefore, the core issue in explaining denoiser prior as a proximal operator of a convex regularizer is training a nonexpansive operator. Recent developments, such as spectral normalization~\cite{ryu2019plug,pmlr-v202-delattre23a}, learning maximally monotone operators~\cite{MMO2021}, learning FNE denoisers~\cite{building_FNE,bredies2024learningfirmlynonexpansiveoperators}, and learning pseudocontractive denoisers~\cite{Wei2024}, have made the nonexpansive assumption more realistic in practice. Constructing the FNE denoiser $D_\sigma$ is feasible, similarly, the residual $R = I-D_\sigma$ can be FNE in~\eqref{eq: MFNE} with $\mathcal{M}=I$. Both can be proximal operators of implicit regularizers.
\section{Halpern-type preconditioned proximal point (HPPP)}
\label{section: Main results}
Compared with KM iteration, the classic Halpern iteration offers the advantage of strong convergence in infinite-dimensional Hilbert spaces~\cite{he2024convergence}. The Halpern iteration $\bu^{k+1}=\lambda_{k+1}\bu^0+(1-\lambda_{k+1})T\bu^k$~\cite{halpern1967fixed} is an implicitly regularized method for finding a particular fixed point~\cite{Diakonikolas2020}, and the sequence $ \{\bu^k\}_{k\in\mathbb{N}} $ generated by the Halpern iteration with suitable $ \{\lambda_k\}_{k\in\mathbb{N}} $ strongly converges to the projection $P_{\mathrm{Fix}(T)}(\bu^0)$~\cite{halpern1967fixed,lions1977approximation,Wittmann1992,2002Iterative,xu2004viscosity, Qi2021,he2024convergence}, where $ P_{\mathrm{Fix}(T)}(\bu^0) = \arg\min_{\mathrm{Fix}(T)} \normm{\bu-\bu^0}^2 $. When PPP is used to solve the inclusion problem~\eqref{problem: inclusion}, the sequence $ \{\bu^k\}_{k\in\mathbb{N}} $ generated by PPP can only converge weakly to some uncertain fixed point of $ \mathcal{T} $. Based on the Halpern iteration, we propose the Halpern-type preconditioned proximal point (HPPP) algorithms~\eqref{eq: HPPP}. 
\subsection{Convergence analysis}
First, we analyze the convergence of the sequence $\{\bu^k\}_{k\in \mathbb{N}}$ generated by HPPP~\eqref{eq: HPPP}. If $ \mathcal{T} $ satisfies the mild condition $\left\| \mathcal{T}\bu-\mathcal{T}\bv \right\|\leq C\left\| \bu-\bv \right\|_\mathcal{M}(C>0)$~\cite{Bredies2022}, then there exists a unique solution $ \bu^* = \arg\min_{\bu\in\mathrm{Fix}(\mathcal{T})}\normm{\bu-\mathbf{a}}_\mathcal{M}^2$, which corresponds to the $\mathcal{M}$-projection of $\mathbf{a}$ onto $\mathrm{Fix}(\mathcal{T})$ from Lemma~\ref{ineq: generalized VI}. In the subsequent analysis, the notation $\bu^*$ will always denote this $\mathcal{M}$-projection. We use "$ \to $" for \textit{strong convergence} and "$\rightharpoonup $" for \textit{weak convergence}.
\begin{theorem}\label{thm:bigthm}
	Let $ \mathcal{A}:\mathcal{H}\to 2^\mathcal{H}$ be an operator with $\mathrm{zer} \mathcal{A} \neq \emptyset$, and let $\mathcal{M}$ be an admissible preconditioner such that $\mathcal{M}^{-1}\mathcal{A}$ is $\mathcal{M}$-monotone and $(\mathcal{M}+\mathcal{A})^{-1}$ is $L$-Lipschitz. Let $\{ \bu^k \}$ be the sequence generated by HPPP~\eqref{eq: HPPP}. Assume that every weak cluster point of $ \{\bu^k\}_{k\in\mathbb{N}} $ lies in $ \mathrm{Fix}(\mathcal{T}) $, and that $ \{\mu_k\}_{k\in\mathbb{N}} $ satisfies 
	\begin{enumerate}[label={(\roman*)},leftmargin=1cm]
		\item $ \lim_{k\to \infty}\mu_k = 0 $,
		\item $ \sum_{k\in \mathbb{N}} \mu_k = +\infty $,
		\item $ \lim_{k\to \infty}\frac{\mu_{k+1}-\mu_k}{\mu_k} = 0\text{ or }\sum_{k\in \mathbb{N}}\left\vert \mu_{k+1}-\mu_k \right\vert<\infty $.
	\end{enumerate}
	Then $\{\bu^k\}_{k\in\mathbb{N}}$ converges strongly to $ \bu^*$.
\end{theorem}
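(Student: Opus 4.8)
The plan is to carry the classical Halpern argument over to the degenerate, seminorm framework induced by $\mathcal{M}$, exploiting two structural facts: the $\mathcal{M}$-firm nonexpansiveness~\eqref{eq: MFNE} of $\mathcal{T}$ (in particular its $\mathcal{M}$-nonexpansiveness), and the elementary bound $\normm{\mathcal{M}\bz}^2=\langle\mathcal{M}\bz,\mathcal{M}\bz\rangle\le\normm{\mathcal{M}}\langle\mathcal{M}\bz,\bz\rangle=\normm{\mathcal{M}}\normm{\bz}_\mathcal{M}^2$, which combined with the $L$-Lipschitz continuity of $(\mathcal{M}+\mathcal{A})^{-1}$ yields $\normm{\mathcal{T}\bu-\mathcal{T}\bv}\le L\sqrt{\normm{\mathcal{M}}}\,\normm{\bu-\bv}_\mathcal{M}=:C\normm{\bu-\bv}_\mathcal{M}$ --- the ``mild condition'' invoked after~\eqref{eq: HPPP}. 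First I would establish boundedness: fix $\bv\in\mathrm{Fix}(\mathcal{T})$ (nonempty, since $\mathrm{zer}\,\mathcal{A}\subseteq\mathrm{Fix}(\mathcal{T})$). Convexity of $\normm{\cdot}_\mathcal{M}$ and $\mathcal{M}$-nonexpansiveness give $\normm{\bu^{k+1}-\bv}_\mathcal{M}\le\mu_{k+1}\normm{\mathbf{a}-\bv}_\mathcal{M}+(1-\mu_{k+1})\normm{\bu^k-\bv}_\mathcal{M}$, hence by induction $\normm{\bu^k-\bv}_\mathcal{M}\le\max\{\normm{\mathbf{a}-\bv}_\mathcal{M},\normm{\bu^0-\bv}_\mathcal{M}\}$; feeding this into $\normm{\bu^{k+1}-\bv}\le\mu_{k+1}\normm{\mathbf{a}-\bv}+C\normm{\bu^k-\bv}_\mathcal{M}$ shows $\{\bu^k\}$ is bounded in $\mathcal{H}$ (so weak cluster points exist), and that $\normm{\mathbf{a}-\bu^k}_\mathcal{M}$ and $\normm{\mathbf{a}-\mathcal{T}\bu^k}_\mathcal{M}$ are uniformly bounded.

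Next I would prove asymptotic regularity. Subtracting consecutive iterates, $\bu^{k+1}-\bu^k=(\mu_{k+1}-\mu_k)(\mathbf{a}-\mathcal{T}\bu^{k-1})+(1-\mu_{k+1})(\mathcal{T}\bu^k-\mathcal{T}\bu^{k-1})$; taking $\normm{\cdot}_\mathcal{M}$ and using $\normm{\mathcal{T}\bu^k-\mathcal{T}\bu^{k-1}}_\mathcal{M}\le\normm{\bu^k-\bu^{k-1}}_\mathcal{M}$ gives $a_{k+1}\le(1-\mu_{k+1})a_k+M\lvert\mu_{k+1}-\mu_k\rvert$ with $a_k:=\normm{\bu^k-\bu^{k-1}}_\mathcal{M}$ and $M:=\sup_k\normm{\mathbf{a}-\mathcal{T}\bu^{k-1}}_\mathcal{M}<\infty$. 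A standard lemma on nonnegative real sequences then forces $a_k\to0$: under the summable-variation branch of (iii) take $\gamma_k=\mu_{k+1}$, $\varepsilon_k=M\lvert\mu_{k+1}-\mu_k\rvert$; under the vanishing-ratio branch note $\mu_{k+1}/\mu_k\to1$ and write $M\lvert\mu_{k+1}-\mu_k\rvert=\mu_{k+1}\cdot M\lvert\mu_{k+1}-\mu_k\rvert/\mu_{k+1}$ with the second factor tending to $0$ --- in both cases $\sum\gamma_k=\infty$ by (ii). Rewriting $\bu^{k+1}-\bu^k=\mu_{k+1}(\mathbf{a}-\bu^k)+(1-\mu_{k+1})(\mathcal{T}\bu^k-\bu^k)$ and using (i) $\mu_k\to0$ then yields $\normm{\mathcal{T}\bu^k-\bu^k}_\mathcal{M}\to0$ (which is also what one would use to verify the weak-cluster-point hypothesis in concrete cases, and which underlies the $\mathcal{O}(1/k)$ rates).

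Let $\bu^*$ be the minimizer of $\normm{\bu-\mathbf{a}}_\mathcal{M}^2$ over $\mathrm{Fix}(\mathcal{T})$, which by Lemma~\ref{lemma: generalized VI} is the unique point with $\langle\mathbf{a}-\bu^*,\bv-\bu^*\rangle_\mathcal{M}\le0$ for all $\bv\in\mathrm{Fix}(\mathcal{T})$. Picking a subsequence along which $\langle\mathbf{a}-\bu^*,\bu^k-\bu^*\rangle_\mathcal{M}$ tends to its $\limsup$ and refining it so $\bu^{k_j}\rightharpoonup\bar\bu$, the hypothesis gives $\bar\bu\in\mathrm{Fix}(\mathcal{T})$; since $\langle\mathbf{a}-\bu^*,\,\cdot\,\rangle_\mathcal{M}=\langle\mathcal{M}(\mathbf{a}-\bu^*),\,\cdot\,\rangle$ is weakly continuous, $\limsup_k\langle\mathbf{a}-\bu^*,\bu^k-\bu^*\rangle_\mathcal{M}=\langle\mathbf{a}-\bu^*,\bar\bu-\bu^*\rangle_\mathcal{M}\le0$. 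Finally, expanding with $\normm{x+y}_\mathcal{M}^2\le\normm{x}_\mathcal{M}^2+2\langle y,x+y\rangle_\mathcal{M}$ for $x=(1-\mu_{k+1})(\mathcal{T}\bu^k-\bu^*)$, $y=\mu_{k+1}(\mathbf{a}-\bu^*)$, together with $(1-\mu_{k+1})^2\le1-\mu_{k+1}$ and $\normm{\mathcal{T}\bu^k-\bu^*}_\mathcal{M}\le\normm{\bu^k-\bu^*}_\mathcal{M}$, gives $\normm{\bu^{k+1}-\bu^*}_\mathcal{M}^2\le(1-\mu_{k+1})\normm{\bu^k-\bu^*}_\mathcal{M}^2+2\mu_{k+1}\langle\mathbf{a}-\bu^*,\bu^{k+1}-\bu^*\rangle_\mathcal{M}$; the same real-sequence lemma (with $\delta_k=2\langle\mathbf{a}-\bu^*,\bu^{k+1}-\bu^*\rangle_\mathcal{M}$, $\limsup_k\delta_k\le0$ by the previous sentence up to an index shift) yields $\normm{\bu^k-\bu^*}_\mathcal{M}\to0$, and then $\normm{\bu^{k+1}-\bu^*}\le\mu_{k+1}\normm{\mathbf{a}-\bu^*}+C\normm{\bu^k-\bu^*}_\mathcal{M}\to0$, i.e.\ $\bu^k\to\bu^*$ strongly in $\mathcal{H}$; uniqueness is part of Lemma~\ref{lemma: generalized VI}.

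The main obstacle is that the natural arena for the whole computation is the degenerate seminorm $\normm{\cdot}_\mathcal{M}$ --- where $\mathcal{T}$ behaves like a firmly nonexpansive map and the Halpern machinery runs cleanly --- while the conclusion must be strong convergence in the genuine norm of $\mathcal{H}$. The crux is therefore the two bridging steps: the inequality $\normm{\mathcal{M}\bz}\le\sqrt{\normm{\mathcal{M}}}\,\normm{\bz}_\mathcal{M}$ combined with $L$-Lipschitzness of $(\mathcal{M}+\mathcal{A})^{-1}$, used both to get boundedness in $\mathcal{H}$ (so weak cluster points exist) and to upgrade $\normm{\bu^k-\bu^*}_\mathcal{M}\to0$ to $\normm{\bu^k-\bu^*}\to0$; and checking that every ``constant'' ($M$, and the bounds on $\normm{\mathbf{a}-\bu^k}_\mathcal{M}$, $\normm{\mathbf{a}-\mathcal{T}\bu^k}_\mathcal{M}$) is genuinely uniform in $k$. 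A secondary delicate point is treating the two alternatives in condition (iii) within a single invocation of the real-sequence convergence lemma.
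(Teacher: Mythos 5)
Your proposal is correct and follows essentially the same route as the paper: boundedness and asymptotic regularity via the bridge $\normm{\mathcal{T}\bu-\mathcal{T}\bv}\le C\normm{\bu-\bv}_{\mathcal{M}}$ (the paper gets $C$ from a factorization $\mathcal{M}=\mathcal{C}\mathcal{C}^*$, you from $\normm{\mathcal{M}\bz}\le\sqrt{\normm{\mathcal{M}}}\,\normm{\bz}_{\mathcal{M}}$, which is the same bound), then the variational characterization of the $\mathcal{M}$-projection (Lemma~\ref{lemma: generalized VI}), the limsup-subsequence argument with the weak-cluster hypothesis, the Xu-type real-sequence lemma to get $\normm{\bu^k-\bu^*}_{\mathcal{M}}\to 0$, and the final upgrade to norm convergence exactly as in the paper. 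The only cosmetic difference is that your final recursion carries $\langle\mathbf{a}-\bu^*,\bu^{k+1}-\bu^*\rangle_{\mathcal{M}}$ rather than the paper's term with $\mathcal{T}\bu^k$, which is an equivalent (indeed slightly cleaner) bookkeeping choice.
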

\begin{proof}
	First, we show $\lim_{k\to \infty}\left\| \bu^k-\bu^* \right\|_\mathcal{M}= 0$. 
	Based on the definition of the upper limit, we take a subsequence $\{ \mathbf{u}^{k_n} \}_{n\in \mathbb{N}}$ of $\{ \mathbf{u}^{k} \}_{k\in \mathbb{N}}$ such that 
	\begin{equation}\label{ineq: VI of M projection}
	\limsup_{k\to \infty}\left \langle \mathbf{a}-\bu^*, \mathcal{T}\bu^k-\bu^*\right\rangle_ \mathcal{M} = \lim_{n\to\infty} \left \langle \mathbf{a}-\bu^*, \mathcal{T} \bu^{k_n}-\bu^*\right\rangle_ \mathcal{M}.
	\end{equation}
	By Lemma~\ref{lemma: lemma for regularized fixed point}(i), the sequence $\{ \mathbf{u}^{k_n} \}_{n\in \mathbb{N}}$ is bounded, there exists a weakly convergent subsequence $\{ \mathbf{u}^{k_{n_j}} \}_{j\in \mathbb{N}}$, and we may assume $\mathbf{u}^{k_{n_j}}\rightharpoonup \tilde{\mathbf{u}}$. According to the conditions of the theorem, we have $ \tilde{\mathbf{u}}\in \mathrm{Fix}(\mathcal{T}) $.

	Applying Lemma~\ref{lemma: lemma for regularized fixed point}, together with conditions (i)-(iii), we obtain $\mathcal{T}\mathbf{u}^k-\mathbf{u}^k\to \mathbf{0}$; then $\langle \mathbf{a}-\mathbf{u}^*,\mathcal{T}\mathbf{u}^k-\mathbf{u}^k\rangle_{\mathcal{M}} \to 0$. Hence, we have
	\begin{align*}
	\lim_{n\to\infty} \left \langle \mathbf{a}-\bu^*, \mathcal{T} \mathbf{u}^{k_n}-\bu^*\right\rangle_ \mathcal{M} & = \lim_{j\to \infty}\left \langle \mathbf{a}-\bu^*, (\mathcal{T}\mathbf{u}^{k_{n_j}}-\mathbf{u}^{k_{n_j}})+(\mathbf{u}^{k_{n_j}}-\bu^*)\right\rangle_ \mathcal{M} \\
		& = \lim_{j\to \infty} \left \langle \mathbf{a}-\bu^*, \mathbf{u}^{k_{n_j}}-\bu^*\right\rangle_ \mathcal{M}  \\
		& = \left \langle \mathbf{a}-\bu^*, \tilde{\bu}-\bu^*\right\rangle_ \mathcal{M}.
	\end{align*}

	By Lemma~\ref{lemma: generalized VI}, we have 
	\begin{equation}
		\limsup_{k\to \infty}\left \langle \mathbf{a}-\bu^*, \mathcal{T}\bu^k-\bu^*\right\rangle_ \mathcal{M} = \left \langle \mathbf{a}-\bu^*, \tilde{\bu}-\bu^*\right\rangle_ \mathcal{M}\leq 0.
	\end{equation}

	Denoting $\delta_{k+1} = \mu_{k+1}\left\| \mathbf{a}-\bu^* \right\|_ \mathcal{M}^2+2(1-\mu_{k+1})\left \langle \mathbf{a}-\bu^*, \mathcal{T}\bu^{k}-\bu^*\right\rangle_ \mathcal{M}$, noticing that
	\begin{align*}
	\left\| \bu^{k+1}-\bu^* \right\|_ \mathcal{M}^2 & = \left\| \mu_{k+1}(\mathbf{a}-\bu^*)+(1-\mu_{k+1})(\mathcal{T}\bu^k-\bu^*) \right\|_\mathcal{M}^2 \\
	& = \mu_{k+1}^2\left\| \mathbf{a}-\bu^* \right\|_\mathcal{M}^2+(1-\mu_{k+1})^2\left\| \mathcal{T}\bu^k-\bu^* \right\|_ \mathcal{M}^2 \\
	&+2\mu_{k+1}(1-\mu_{k+1})\left \langle \mathbf{a}-\bu^*, \mathcal{T}\bu^k-\bu^*\right\rangle_ \mathcal{M} \\
	& \leq (1-\mu_{k+1})\left\| \bu^k-\bu^* \right\|_\mathcal{M}^2+\mu_{k+1}\delta_{k+1},
	\end{align*}
	and applying Lemma~\ref{lemma for convergence}, together with conditions (i)-(ii) and $\limsup_{k\to \infty }\delta_k \leq 0$, we have $\lim_{k\to \infty}\left\| \bu^k-\bu^* \right\|_ \mathcal{M} = 0.$

	By the definition of $\mathbf{u}^{k+1}$ we immediately obtain
		\begin{align}
			\left\| \bu^{k+1}-\bu^* \right\| & = \left\| \mu_{k+1}(\mathbf{a}-\bu^*) +(1-\mu_{k+1})(\mathcal{T}\bu^k-\bu^*)\right\| \notag\\
			&\leq \mu_{k+1}\left\| \mathbf{a}-\bu^* \right\|+(1-\mu_{k+1})\left\| \mathcal{T}\bu^k - \bu^* \right\| \notag\\
			&\leq \mu_{k+1}\left\| \mathbf{a}-\bu^* \right\|+C(1-\mu_{k+1})\left\| \bu^k - \bu^* \right\|_\mathcal{M},\label{ineq: u_k+1-u*}
			\end{align}
	where the last inequality follows from~\eqref{ineq: boundedness by M norm} and $C$ is defined as in~\eqref{constant C}.
	Since $\lim_{k\to\infty}\mu_k = 0$ and $\lim_{k\to \infty}\left\| \bu^k-\bu^*\right\|_\mathcal{M}= 0$, combined with~\eqref{ineq: u_k+1-u*} they yield that 
	\begin{equation*}
		\lim_{k\to \infty} \left\| \mathbf{u}^k-\mathbf{u}^* \right\|  = 0,
	\end{equation*}
	which ends the proof of Theorem~\ref{thm:bigthm}.
\end{proof}

It is easy to verify that $ \mu_k = \frac{1}{k^\alpha}(0<\alpha\leq 1) $ satisfies conditions (i)-(iii) of Theorem~\ref{thm:bigthm}. Compared with~\cite{Bauschke2023}, we have provided an alternative method to obtain the limit point as the $ \mathcal{M}$-projection of the initial point. The work~\cite{Bauschke2023} requires that $\mathcal{A}$ be linear, whereas we consider a more general case for $\mathcal{A}$. By introducing the maximal monotonicity assumption on $\mathcal{A}$, we can prove that every weak cluster point $\mathbf{u}$ of $\{\mathbf{u}^k\}_{k\in\mathbb{N}}$ lies in $\mathrm{Fix}(\mathcal{T})$. According to Theorem~\ref{thm:bigthm}, we obtain the following proposition.
\begin{proposition}\label{cor:bigthm cor}
	If the conditions of Theorem~\ref{thm:bigthm} hold, and $ \mathcal{A}:\mathcal{H}\to 2^\mathcal{H}$ is a maximal monotone operator, then $\{ \bu^k \}_{k\in \mathbb{N}}$ converges strongly to $\bu^*$.
\end{proposition}
	\begin{proof}
	Assume that $\mathbf{u}^{k_n}\rightharpoonup \mathbf{u}$. By Lemma~\ref{lemma: lemma for regularized fixed point}(iii), we have $ \mathcal{T}\bu^{k_n}-\bu^{k_n}\to \mathbf{0} $, and it follows that
	\begin{displaymath}
	\mathcal{T}\bu^{k_n} = (\mathcal{T}\bu^{k_n}-\bu^{k_n})+\bu^{k_n}\rightharpoonup \mathbf{0}+\bu = \bu,
	\end{displaymath}
	i.e., $ \mathcal{T} \bu^{k_n}\rightharpoonup \bu $. Using $ \mathcal{T}\bu^{k_n}-\bu^{k_n}\to \mathbf{0} $, we have
	\begin{equation*}
	\mathcal{A}\mathcal{T}\bu^{k_n} \owns \mathcal{M}(\bu^{k_n}-\mathcal{T}\bu^{k_n})\to \mathbf{0}.
	\end{equation*}
	Since $ \mathcal{A} $ is maximal, we have that $ \mathcal{A} $ is closed in $ \mathcal{H}_{weak}\times  \mathcal{H}_{strong}$~\cite[Proposition 20.38]{BCombettes}, and hence $ \mathbf{0}\in \mathcal{A}\bu $, i.e., $ \bu \in \mathrm{zer}\mathcal{A} = \mathrm{Fix}(\mathcal{T})$. Thus, we prove that every weak cluster point of $ \{\bu^k\}_{k\in\mathbb{N}}$ lies in $ \mathrm{Fix}(\mathcal{T})$. This completes the proof of Proposition~\ref{cor:bigthm cor}. 
\end{proof}
\begin{remark}
	We compare Theorem~\ref{thm:bigthm}, Proposition~\ref{cor:bigthm cor} with~\cite[Theorem 2.9]{Bredies2022} and~\cite[Corollary 2.10]{Bredies2022}, where the sequence $ \{\bu^k\}_{k\in\mathbb{N}} $ generated by HPPP converges strongly to a particular fixed point of $ \mathcal{T} $. All conditions are the same, except for the additional assumption about $ \{\mu_k\}_{k\in\mathbb{N}} $. The Lipschitz regularity of $ (\mathcal{M}+\mathcal{A})^{-1} $ is a mild assumption, especially in applications to splitting algorithms, and is used to prove the uniqueness of $ \mathcal{M}$-projection and guarantee the boundedness of  $ \{\bu^k\}_{k\in\mathbb{N}},\{\mathcal{T}\bu^k\}_{k\in\mathbb{N}} $.
\end{remark}

Hundal constructed an example in which the alternating projection $(P_UP_V)^k \mathbf{x}^0$ converges weakly but not strongly~\cite{HUNDAL200435}, where $U$ is a closed convex cone and $V$ is a closed hyperplane. Building on Hundal's counterexample,~\cite{DR_weak_convergence2020} showed that the weak convergence of the DRS algorithm cannot be improved to strong convergence. Similarly, using the same counterexample, we can demonstrate that PPP also fails to converge strongly.
\begin{counterexample}\label{counterexample}
	Suppose that $\mathcal{H}$ is infinite-dimensional and
	separable. Let $V$ and $U$ be Hundal's hyperplane and Hundal's cone, respectively, and set
	\begin{equation*}
		\mathcal{A}_1: \mathbf{x}\mapsto \begin{cases}
			V^{\perp } & \text{ if } \mathbf{x}\in V, \\
			\emptyset  & \text{ if } \mathbf{x} \notin V,
		\end{cases}
		\text{ and } \mathcal{A}_2 = (P_V\circ P_U\circ P_V)^{-1}-I.
	\end{equation*}
	Then the sequence $\{ \mathbf{u}^k \}_{k\in \mathbb{N}}$ generated by PPP~\eqref{eq: PPP} with $\lambda_k=1$ converges weakly, but not strongly, to a zero of $\mathcal{A}_1+\mathcal{A}_2$. 
\end{counterexample}
\begin{proof}
	As proved in~\cite{DR_weak_convergence2020}, the operators $\mathcal{A}_1$ and $\mathcal{A}_2$ are maximally monotone, and $\mathbf{0}\in \mathrm{zer}(\mathcal{A}_1+\mathcal{A}_2)$, and the sequence $\{ \mathbf{w}^k \}_{k\in \mathbb{N}}$ 
	\begin{equation}\label{DRS iteration}
		\mathbf{w}^{k+1} = \mathbf{w}^{k} + J_{\mathcal{A}_2} (2J_{\mathcal{A}_1}\mathbf{w}^{k}-\mathbf{w}^{k})- J_{\mathcal{A}_1}\mathbf{w}^{k},
	\end{equation}
	generated by the DRS algorithm can weakly converge to $\mathbf{0}$ and $ \mathbf{w}^k \not\rightarrow \mathbf{0}$.

	The sequence~\eqref{DRS iteration} can be viewed as a special PPP with
	\begin{equation*}
		\mathcal{A} = \begin{pmatrix}
			\mathcal{A}_1 & I \\
			-I & \mathcal{A}_2
		\end{pmatrix},\quad \mathcal{M} =\begin{pmatrix}
			I & -I \\ 
			-I & I
		\end{pmatrix},
	\end{equation*}
	and $\mathbf{w}^k = \mathcal{C}^* \mathbf{u}^k =\mathbf{x}^k-\mathbf{y}^k$, where $\mathcal{C}^* = (I,-I)$. By~\cite[Corollary 2.15]{Bredies2022}, $\mathbf{u}^k\rightharpoonup \mathbf{0}$. Indeed, if by contraction, $\mathbf{u}^k\to \mathbf{0}$, then
	\begin{equation*}
		\left\| \mathbf{w}^k -\mathbf{0} \right\| = \left\|  \mathcal{C}^* \mathbf{u}^k-\mathbf{0}\right\|\leq \left\| \mathcal{C} \right\| \cdot \left\| \mathbf{u}^k-\mathbf{0} \right\| \to 0,
	\end{equation*} 
	which contradicts the fact that $\mathbf{w}^k \not\rightarrow \mathbf{0}$. Thus $ \{ \mathbf{u}^k \}_{k\in \mathbb{N}}$ can only converge weakly to $\mathbf{0}$.
\end{proof}
\begin{remark}
	From Counterexample~\ref{counterexample}, HPPP has a strong convergence advantage over PPP in infinite-dimensional spaces, which has important theoretical value.
\end{remark}

\begin{remark}
Regarding the choice of the anchor point, we can use the degraded observed image, the denoised image, or the restarted technique (see Algorithm~\ref{alg: Restart HPPP}). The restarted technique is an effective method in machine learning~\cite{Park2022} and LP~\cite{chen2024hprlpimplementationhprmethod,lu2024restartedhalpernpdhglinear}. 
\end{remark}
\begin{algorithm}
	\caption{Restarted HPPP}
	\label{alg: Restart HPPP}
	\begin{algorithmic}[1]
		\Require Initialization $ \bu^{0,0}  = \mathbf{a}^0$, total iteration $N>0$, period $q>0$, and epoch $N_e = \lfloor \frac{N}{q} \rfloor$
		\For{$ n=1,2,\ldots,N_e$}
			\For{$ k=0,1,\ldots,q-1 $}
				\State $\bu^{n,k+1} =\mathrm{ HPPP} (\bu^{n,k},\mathbf{a}^n)$
			\EndFor
		\State $\mathbf{a}^{n+1} = \bu^{n,q}$
		\EndFor
		\Ensure $ \bu^{N_e,q}$
	\end{algorithmic}
\end{algorithm}
\subsection{Convergence rate}
\label{appendix: convergence rate}
Based on the following technical lemma, Sabach and Shtern~\cite{Sabach2017} first showed that, for a general form of Halpern iteration, the fixed-point residual converges at a rate of $\mathcal{O}(1/k)$.
\begin{lemma}[\cite{Sabach2017}]\label{lemma: convergence rate}
	Let $M_1>0$. Assume that $\{ a_k \}_{k\in \mathbb{N}}$ is a sequence of nonnegative real numbers such that $a_1<M_1$ and
	\[a_{k+1}\leq (1-\gamma b_{k+1})a_k +(b_k-b_{k+1})c_k, k\geq 1,\]
	where $\gamma\in (0,1]$, the sequence $\{ b_k \}_{k\in \mathbb{N}}$ is defined as in $b_k = \min \{ \frac 2{\gamma k},1 \}$, and $\{c_k\}_{k\in \mathbb{N}}$ is a sequence of real numbers such that $c_k\leq M_1<\infty $. Then the sequence $\{ a_k \}_{k\in \mathbb{N}}$ satisfies
	\[a_k\leq \frac{M_1J}{\gamma k}, k\geq 1,\]
	where $J =\lfloor \frac 2\gamma \rfloor$.
\end{lemma}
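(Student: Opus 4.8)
The plan is to prove $a_k \le \tfrac{MJ}{\gamma k}$ for all $k \ge 1$ by induction, after first pinning down the structure of $\{b_k\}$. Since $\gamma \in (0,1]$ we have $\tfrac{2}{\gamma}\ge 2$, so $b_k = 1$ exactly for the integers $1 \le k \le J = \lfloor 2/\gamma\rfloor$, while $b_k = \tfrac{2}{\gamma k} < 1$ for $k \ge J+1$; in particular $J \ge 2$ and $J \le 2/\gamma$. Two elementary facts will be used repeatedly: $\{b_k\}$ is nonincreasing, so $b_k - b_{k+1}\ge 0$; and $\gamma b_{k+1} \le \tfrac{2}{k+1}\le 1$ for every $k\ge 1$, so $1-\gamma b_{k+1}\ge 0$. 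Hence both coefficients on the right-hand side of the recursion are nonnegative, and one may freely replace $c_k$ by its upper bound $M$.

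\textbf{Base cases $1\le k\le J$.} For $2\le k\le J$ both indices $k-1$ and $k$ lie in the range where $b\equiv 1$, so the recursion collapses to $a_k \le (1-\gamma)a_{k-1}\le a_{k-1}$; iterating from $a_1<M$ gives $a_k<M$ for all $k\le J$. Since $\gamma\le 1$ and $k\le J$ give $\tfrac{MJ}{\gamma k}\ge \tfrac{MJ}{\gamma J}=\tfrac{M}{\gamma}\ge M$, the claimed bound holds for these indices.

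\textbf{Inductive step.} Assume $a_k\le \tfrac{MJ}{\gamma k}$ for some $k\ge J$; I would split into the generic case $k\ge J+1$ and the transition case $k=J$. For $k\ge J+1$ we have $b_k=\tfrac{2}{\gamma k}$, $b_{k+1}=\tfrac{2}{\gamma(k+1)}$, hence $1-\gamma b_{k+1}=\tfrac{k-1}{k+1}$ and $b_k-b_{k+1}=\tfrac{2}{\gamma k(k+1)}$; substituting and clearing denominators yields
\[
a_{k+1} \le \frac{M\big[(k-1)J+2\big]}{\gamma k(k+1)},
\]
and $(k-1)J+2\le kJ$ holds precisely because $J\ge 2$, which is exactly $a_{k+1}\le \tfrac{MJ}{\gamma(k+1)}$. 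In the transition case $k=J$ one has $b_J=1$ but $b_{J+1}=\tfrac{2}{\gamma(J+1)}$; using $a_J<M$ from the base cases together with $c_J\le M$ and simplifying gives
\[
a_{J+1}\le \frac{2M}{J+1}\Big(J-\frac{1}{\gamma}\Big),
\]
so the desired inequality $a_{J+1}\le \tfrac{MJ}{\gamma(J+1)}$ reduces, after multiplying by $\tfrac{\gamma(J+1)}{M}$, to $J(2\gamma-1)\le 2$. This is immediate when $2\gamma\le 1$, and when $2\gamma>1$ it follows from $J\le 2/\gamma$, since then $J(2\gamma-1)\le \tfrac{2}{\gamma}(2\gamma-1)=4-\tfrac{2}{\gamma}\le 2$. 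Combining the base cases with the inductive step closes the induction.

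\textbf{Main obstacle.} The argument is an elementary finite induction, and the only genuinely delicate point is the regime change of $b_k$ at $k=J$, which prevents a single uniform step and forces the separate ``transition'' computation. Beyond that, the work is to check that every algebraic reduction collapses to one of the two structural inequalities $J\ge 2$ and $J\le 2/\gamma$, both direct consequences of $\gamma\in(0,1]$; careful bookkeeping around the boundary indices $k=1$, $k=J$, $k=J+1$ is where a sloppy proof would slip.
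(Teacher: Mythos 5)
Your proof is correct. Note that the paper itself offers no argument for this statement: it is imported verbatim from the cited reference \cite{Sabach2017}, so there is no in-paper proof to compare against. Your induction is a sound self-contained reconstruction: the identification $b_k=1$ for $k\le J=\lfloor 2/\gamma\rfloor$ and $b_k=\tfrac{2}{\gamma k}$ for $k\ge J+1$ is right, the base range gives $a_k\le a_1<M\le \tfrac{MJ}{\gamma k}$ for $k\le J$, the generic step correctly reduces to $(k-1)J+2\le kJ$, i.e.\ $J\ge 2$, and the transition step at $k=J$ reduces to $J(2\gamma-1)\le 2$, which your case split on $2\gamma\lessgtr 1$ settles using $J\le 2/\gamma$ and $\gamma\le 1$. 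This is essentially the same two-regime induction as in the original source, so nothing further is needed.
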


By applying Lemma~\ref{lemma: convergence rate}, we will establish a sublinear convergence rate for $\left\| \mathbf{u}^{k}-\mathbf{u}^{k-1} \right\|$ and fixed-point residual $\left\| \mathcal{T}\mathbf{u}^k -\mathbf{u}^k \right\| $ or $\left\| \mathcal{T}\mathbf{u}^k -\mathbf{u}^k \right\|_\mathcal{M} $.
\begin{proposition}\label{cor: convergence rate of HPPP}
	Let $ \mathcal{A}:\mathcal{H}\to 2^\mathcal{H}$ be a maximal operator with $\mathrm{zer} \mathcal{A} \neq \emptyset$, and let $\mathcal{M}$ be an admissible preconditioner such that $\mathcal{M}^{-1}\mathcal{A}$ is $\mathcal{M}$-monotone and $(\mathcal{M}+\mathcal{A})^{-1}$ is $L$-Lipschitz. Let $\{ u^k \}$ be the sequence generated by~\eqref{eq: HPPP}. If $\mu_k = \min \{ \frac 2{ k},1 \} $, then
	\begin{enumerate}[label={(\roman*)},leftmargin=1cm]
		\item $\left\| \bu^{k}-\bu^{k-1} \right\|_\mathcal{M}\leq \frac{2M}{k}$, i.e., $\left\| \bu^{k}-\bu^{k-1} \right\|_\mathcal{M} = \mathcal{O}\left(\frac 1k\right)$;
		\item $\left\| \bu^{k+1}-\bu^k \right\| =  \mathcal{O}\left(\frac 1k\right)$, $\left\| \bu^k-\mathcal{T}\bu^k \right\| = \mathcal{O}\left(\frac 1k\right)$.
	\end{enumerate}
\end{proposition}
\begin{proof}
	(i) According to~\eqref{ineq: convergence of M uk+1-uk}, we have
	\begin{displaymath}
	\left\| \bu^{k+1}-\bu^k \right\|_\mathcal{M} \leq (1-\mu_{k+1})\left\|\bu^k-\bu^{k-1} \right\|_\mathcal{M}+ M\left\vert \mu_{k+1}-\mu_k \right\vert,
	\end{displaymath}
	where $M = \left\| \mathbf{a}-\bu^* \right\|_\mathcal{M}+C_1, \mathbf{u}^*\in \mathrm{Fix}(\mathcal{T})$, and $C_1$ is defined as in~\eqref{constant C1}. Applying Lemma~\ref{lemma: convergence rate} together with $c_k = M_1 = M, J=2,\gamma = 1, a_k = \left\|\bu^k-\bu^{k-1} \right\|_\mathcal{M}$, and $\mu_k = b_k$ to the above inequality, we have $\left\| \bu^{k}-\bu^{k-1} \right\|_\mathcal{M}\leq \frac{2M}{k}$.

	(ii) By~\eqref{ineq: convergence of uk+1-uk}, when $k\geq 2$ we have
	\begin{align*}
	\left\| \bu^{k+1}-\bu^k \right\|&\leq C(1-\mu_{k+1})\left\| \bu^k-\bu^{k-1} \right\|_\mathcal{M} +M'\left\vert \mu_{k+1}-\mu_k\right\vert  \\
	& \leq \frac{2MC}{k}+\frac{2M'}{k(k+1)} = \mathcal{O}\left(\frac 1k\right),
	\end{align*}
	where the second inequality follows from (i) and the fact that $\mu_k -\mu_{k+1}=\frac{2}{k(k+1)}(k\geq 2)$, and $M' = \left\| \mathbf{a}-\bu^* \right\| +CC_1$, and $C, C_1$ are defined in~\eqref{constant C} and~\eqref{constant C1}, respectively.
	
	As for the second result, by the triangle inequality and $\mathbf{u}^{k+1}-\mathcal{T}\mathbf{u}^k =\mu_{k+1}(\mathbf{a}-\mathcal{T} \mathbf{u}^k)$, for $k\geq 2$,
	\begin{align*}
		\left\| \bu^k-\mathcal{T}\bu^k \right\|
		&\leq \left\| \bu^k-\bu^{k+1} \right\|+\left\| \bu^{k+1}-\mathcal{T}\bu^k \right\|\\
		&\leq \left\| \bu^k-\bu^{k+1} \right\|+\mu_{k+1}\left\| \mathbf{a}-\mathcal{T}\bu^k \right\| \\
		&\leq \frac{2MC}{k}+\frac{2M'}{k(k+1)}+\frac{2M'}{k+1} = \mathcal{O}\left(\frac 1k\right).
	\end{align*} 
\end{proof}

Let $\mathcal{M} = \mathcal{C}\mathcal{C}^*$ be a decomposition of $\mathcal{M}$ according to~\cite[Proposition 2.3]{Bredies2022}. By~\cite[Theorem 2.13]{Bredies2022}, the operator $\tilde{\mathcal{T}} = \mathcal{C}^*(\mathcal{M}+\mathcal{A})^{-1}\mathcal{C}$ is FNE. Let $\mathbf{w}^k=\mathcal{C}^* \mathbf{u}^k$ and $\mathbf{a} =\mathbf{u}^0$, and then HPPP is equivalent to the following reduced algorithm:
\begin{equation}\label{eq: HPPP reduced}
	\mathbf{w}^0 = \mathcal{C}^* \mathbf{u}^0,\mathbf{w}^{k+1} = \mu_{k+1}\mathbf{w}^0+(1-\mu_{k+1})\tilde{\mathcal{T}}\mathbf{w}^k.
\end{equation}
By~\cite[Theorem 2.1]{lieder2021convergence} and~\cite[Proposition 2.9]{Sun2024}, we  further give the following (tight) optimal convergence rate for HPPP.
\begin{proposition}\label{cor: accelerated convergence rate of HPPP}
	Let $ \mathcal{A}:\mathcal{H}\to 2^\mathcal{H}$ be a maximal operator with $\mathrm{zer} \mathcal{A} \neq \emptyset$, and let $\mathcal{M}$ be an admissible preconditioner such that $\mathcal{M}^{-1}\mathcal{A}$ is $\mathcal{M}$-monotone and $(\mathcal{M}+\mathcal{A})^{-1}$ is $L$-Lipschitz. Let $\{\mathbf{u}^k\}_{k\in \mathbb{N}},\{ \mathbf{w}^k\}_{k\in \mathbb{N}}$ be the sequences generated by~\eqref{eq: HPPP} and~\eqref{eq: HPPP reduced}. If $\mu_k = \frac{1}{k+1}$ and $\mathbf{a} = \mathbf{u}^0$, then
	\begin{enumerate}[label={(\roman*)},leftmargin=1cm]
		\item $\left\| \mathbf{w}^k-\tilde{\mathcal{T}}\mathbf{w}^k \right\|\leq \frac{2\left\| \mathbf{w}^0-\mathbf{w}^*\right\| }{k+1} $ for $k\geq 0$ and $\mathbf{w}^*\in \mathrm{Fix}(\tilde{\mathcal{T}})$, 
		\item $\left\| \mathbf{u}^k-{\mathcal{T}}\mathbf{u}^k \right\|_{\mathcal{M}}\leq \frac{2\left\| \mathbf{u}^0-\mathbf{u}^*\right\|_{\mathcal{M}}}{k+1} $ for $k\geq 0$ and $\mathbf{u}^*\in \mathrm{Fix}({\mathcal{T}})$. 
	\end{enumerate}
\end{proposition}
\subsection{GraRED-HP$^3$} 
\label{subsec:GraREDHP3}
The primal-dual algorithm for solving~\eqref{problem: saddle-point problem} is viewed as the fixed-point iteration $\mathbf{u}^{k+1} = \mathcal{T} \mathbf{u}^k = (\mathcal{A}+\mathcal{M})^{-1} \mathcal{M}\mathbf{u}^k$ with $\mathcal{A} = \begin{pmatrix}
	\lambda \partial f & \mathbf{K}^* \\
	- \mathbf{K} & \partial g^*
\end{pmatrix}, \mathcal{M}= \begin{pmatrix} \frac{1}{\tau}I & -\mathbf{K}^* \\ -\mathbf{K} & \frac{1}{s}I \end{pmatrix}$. Under the degenerate case $ \tau s\normm{\mathbf{K}}^2=1 $, the HPPP iteration is given by
\begin{equation}\label{eq: HPPP for saddle-point problem}
\left\{
\begin{array}{ll}
\mathbf{d}^k & =(I+\tau \lambda \partial f)^{-1}(\bx^k-\tau \mathbf{K}^*\by^k),\\
\bx^{k+1} & = \mu_{k+1} \bx_a+(1-\mu_{k+1})\mathbf{d}^k,\\
\by^{k+1} & =\mu_{k+1} \by_a+(1-\mu_{k+1})(I+ s \partial g^*)^{-1}\left(\by^k+s\mathbf{K}(2 \mathbf{d}^k-\mathbf{x}^k)\right),
\end{array}
\right.
\end{equation}
where $ \mathbf{a} = (\bx_a, \by_a), (\bx_0, \by_0)\in\mathcal{X}\times \mathcal{Y} $ are the anchor and initial points. Under the ideal conditions of RED and with assumption~\eqref{assumption: nonexpansive residual}, according to Lemma~\ref{lemma: gradient of RED}, both the denoiser $D_\sigma$ and the residual $R$ can be expressed as the proximal operators of implicit regularizers, i.e., $ D_\sigma (\mathbf{x})= \mathrm{prox}_{\phi}(\bx), R(\bx) = \mathrm{prox}_{\phi^*}(\bx) $. Let $g(\mathbf{x}) =\frac{\phi (s \mathbf{x})}{s}$; then $\phi^* = sg^*$~\cite[Theorem 4.14]{beck2017first}, and we can replace $(I+ s \partial g^*)^{-1}$ with the residual $R$ in~\eqref{eq: HPPP for saddle-point problem}. Set $\mathbf{K}=I$; the implicit gradient RED via HPPP called GraRED-HP$^3$ is proposed in Algorithm~\ref{alg: GraRED-HP3}. Even if the conditions of RED are difficult to meet~\cite{REDClarifications2019}, recent developments~\cite{ryu2019plug,building_FNE,MMO2021,pmlr-v202-delattre23a,BUI2020124315,bredies2024learningfirmlynonexpansiveoperators,Wei2024} can learn FNE denoisers. According to Lemma~\ref{lemma: resolvent condtions}, both the denoiser and the residual still act as the proximal operators of implicit regularizers. 
\subsubsection{Acceleration for PnP methods}
PnP-ADMM (see Algorithm~\ref{alg:pnp-admm}) is a well-known method for solving~\eqref{problem: optimization problem for image restoration}, which can be written into an equivalent DRS form~\cite{ryu2019plug,hurault2022proximal}, i.e., $\mathbf{w}^{k+1}=\tilde{\mathcal{T}}\mathbf{w}^k = \mathbf{w}^{k} + D_\sigma (2\mathrm{prox}_{\lambda f}(\mathbf{w}^{k})-\mathbf{w}^{k})- \mathrm{prox}_{\lambda f}(\mathbf{w}^{k})$. When $ \mathbf{w}^k = \bx^k-\by^k,\tau=s=1, \mu_ k =0 $ in GraRED-HP$^3$ results exactly in the DRS iteration. Thus, PnP-DRS is a special case of GraRED-HP$^3$, which can be obtained from the perspective of PPP~\cite{Bredies2015,Bredies2022}. The related GraRED-P$^3$ algorithm is shown in Algorithm~\ref{alg: GraRED-P3}. Furthermore, Proposition~\ref{cor: accelerated convergence rate of HPPP} demonstrates that GraRED-HP$^3$ is an accelerated PnP method achieving an $\mathcal{O}(1/k)$ convergence rate for the fixed-point residual.
 \begin{remark}
	In~\cite{He2015}, a worst-case convergence rate of $\mathcal{O}\left(\frac{1}{\sqrt{k}}\right)$ for $\left\| \mathbf{w}^k- \tilde{\mathcal{T}}\mathbf{w}^k\right\|$ was established for DRS, which also applies to PnP-ADMM. In contrast, GraRED-HP$^3$ obtains an accelerated convergence rate of $\mathcal{O}\left(\frac{1}{k}\right)$ for $\left\| \mathbf{w}^k- \tilde{\mathcal{T}}\mathbf{w}^k\right\|$.
 \end{remark}
\begin{algorithm}[H]
	\caption{GraRED-HP$ ^3 $}
	\label{alg: GraRED-HP3}
	\begin{algorithmic}[1]
		\Require Initialization $ \bu^0 =(\bx^0,\by^0)\in\bbR^{n}\times \bbR^{n}$, anchor point $ \mathbf{a} = (\bx_a,\by_a) \in\bbR^{n}\times \bbR^{n}$, iteration number $ N>0 $, and $ R = I-D_\sigma $ is the residual.
		\For{$ k=0, 1,2,\ldots, N-1 $}
		\State $  \mathbf{d}^k = \mathrm{prox}_{\lambda f}(\bx^k-\tau \by^k)$
		\State $ \bx^{k+1} = \mu_{k+1}\bx_a+(1-\mu_{k+1})\mathbf{d}^k$
		% \State $\mathbf{v}^{k+1}=R\left(s(2\mathbf{d}^k-\bx^k)+\by^k\right)$
		\State $ \by^{k+1} = \mu_{k+1}\by_a+(1-\mu_{k+1})R\left(\by^k+s(2\mathbf{d}^k-\bx^k)\right)$
		\EndFor
		\Ensure $ \bx^N $.
	\end{algorithmic}
\end{algorithm}
\vspace{-1.2em}
\begin{algorithm}[H]
	\caption{GraRED-P$ ^3 $}
	\label{alg: GraRED-P3}
	\begin{algorithmic}[1]
		\Require Initialization $ \bu^0 =(\bx^0,\by^0)\in\bbR^{n}\times \bbR^{n} $, iteration number $ N>0 $, relaxing parameter $ \lambda_k\in [0,2] $, and $ R = I-D_\sigma $ is the residual.
		\For{$ k=0, 1,2,\ldots, N-1 $}
		\State $ \mathbf{d}^{k} = \mathrm{prox}_{\lambda f}(\bx^{k}-\tau\by^{k})$
		\State $ \bx^{k+1} = \lambda_k \mathbf{d}^{k}+(1-\lambda_k)\mathbf{x}^k$
		\State $ \by^{k+1} = \lambda_k R(\by^k+s(2\mathbf{d}^k-\bx^k)) + (1-\lambda_k) \by^k$
		\EndFor
		\Ensure $ \bx^N $.
	\end{algorithmic}
\end{algorithm}
\begin{algorithm}
    \caption{Relaxed RED-PRO via SD}
    \label{alg:relaxed-red-pro}
    \begin{algorithmic}[1]
        \State \textbf{Input:} $\mathbf{x}^0 \in \mathbb{R}^n$, $\{t_k\}_{k \in \mathbb{N}}$, $\alpha \in (0, 1)$, $\lambda$, $\mu$, $J$, $N,\delta > 0$, and the denoiser $T$.
        \For{$k = 0, 1, 2, \ldots, N - 1$}
            \State $\mathbf{x}^{k,0} = \mathbf{x}^k$
            \For{$j = 0, 1, 2, \ldots, J - 1$}
                \State $\mathbf{x}^{k,j+1} = T_\alpha \left( t_j \mathbf{x}^k + (1 - t_j) \mathbf{x}^{k,j} \right)$
			\EndFor
			\State $\mathbf{v}^k = \frac{\delta}{\|\mathbf{x}^k - \mathbf{x}^{k,J}\|} \mathbf{x}^k + \left( 1 - \frac{\delta}{\|\mathbf{x}^k - \mathbf{x}^{k,J}\|} \right) \mathbf{x}^{k,J}$
			\State $\mathbf{x}^{k+1} = \mathbf{x}^k - \mu \left( \nabla f(\mathbf{x}^k) + \lambda (\mathbf{x}^k - \mathbf{v}^k) \right)$
        \EndFor
        \State \textbf{Output:} $\mathbf{x}^{N}$
    \end{algorithmic}
\end{algorithm}
\subsubsection{Compared with RED-PRO}
RED-PRO~\cite{cohen2021regularization} is defined by
 \begin{equation}
	\min_{\mathbf{x}\in \mathrm{Fix}(T)} \frac{1}{2\sigma^2} \left\| \mathbf{A}\mathbf{x}-\mathbf{y} \right\|^2.
 \end{equation}
RED-PRO sheds new insights on the denoiser prior, which theoretically bridges the gap between RED and the PnP prior. However, practical denoisers often have narrow fixed-point sets, leading to suboptimal recovery solutions. To address this issue, they relax the hard constraint of $\mathrm{Fix}(T)$ and replace it with a dilated fixed-point set, defined for some $\delta >0$ as
\begin{equation*}%\label{eq: dilated fixed-point set}
	B_\delta (T) = \left\{ \mathbf{x}\in \bbR^n: \left\| \mathbf{x}-P_{\mathrm{Fix}(T)}(\mathbf{x}) \right\|\leq \delta \right\},
\end{equation*}
where $P_{\mathrm{Fix}(T)}(\mathbf{x})$ is the fixed-point projection of $\mathbf{x}$ onto $\mathrm{Fix}(T)$. The relaxed RED-PRO (RRP) problem is formulated as
\begin{equation}\label{eq: RRP}
	\min_{\mathbf{x}\in \mathbb{R}^n} \frac{1}{2\sigma^2} \left\| \mathbf{A}\mathbf{x}-\mathbf{y} \right\|^2+\frac{\lambda}{2} \left\| \mathbf{x}-P_{B_\delta (T)}(\mathbf{x}) \right\|^2.
\end{equation}
Algorithm~\ref{alg:relaxed-red-pro} outlines RRP, where $T_\alpha = \alpha T+(1-\alpha) I$ with the demicontractive denoiser $T$. When the steepest descent method solves~\eqref{eq: RRP}, it requires an inner loop to calculate the fixed-point projection $P_{\mathrm{Fix}(T)}(\mathbf{x}^k)$, and line 5 in RRP is equivalent to the Halpern iteration.

Comparing Algorithm~\ref{alg: GraRED-HP3} with the inner loop of Algorithm~\ref{alg:relaxed-red-pro}, we see that both methods share the same algorithmic form of Halpern iteration and utilize identical anchor coefficients. In both cases, finding $P_{\mathrm{Fix}(T)}(\mathbf{x}^k)$ or $P_{\mathrm{Fix}(\mathcal{T})}^\mathcal{M}(\mathbf{a})$ corresponds to a bilevel optimization problem that finds the point in the fixed-point set closest to the anchor point $\mathbf{x}^k$ or $\mathbf{a}$.

The key difference is that HPPP extends Halpern iteration to a degenerate form by employing a positive semidefinite metric $\mathcal{M}$, whereas the denoiser operator $T$ in RED-PRO is defined under the standard metric $\mathcal{M} = I$. From an optimization perspective, the RRP algorithm is based on gradient descent and uses the classic Halpern iteration in its inner loop. In contrast, $\mathcal{T}$ in GraRED-HP$^3$ is an algorithmic operator and has no inner loop. From a PPP standpoint, we integrate HPPP with denoiser priors to propose Algorithm~\ref{alg: GraRED-HP3}.
\section{Experiments}
\label{sec:experiments}
In this section, we show the numerical experiments of the algorithms discussed in section 3. First, we will verify the implicit regularity of HPPP by an easy 1D example, and validate the accelerated advantage of HPPP and restarted HPPP by other toy examples. Then, we compare CP, PPP~\eqref{eq: PPP},  GraRED-HP$^3$, and GraRED-P$^3$ for image deblurring and inpainting under the same setting and verify the efficiency of the proposed algorithms. The source code is available at \href{https://github.com/zsc15/HPPP}{https://github.com/zsc15/HPPP}.
\subsection{A toy example}
We consider the optimization problem in $ \mathbb{R} $, i.e.,
\begin{equation}
\min_{x\in \mathbb{R}} f(x)+g(x),
\end{equation}
where $ f(x) =\max\{-x, 0\} $ and $ g(x) = \max\{1-x, 0\} $, and we plot $f(x)+g(x)$ in Figure~\ref{fig: f+g}. 
\begin{figure}[htbp]
\centering
\begin{tikzpicture}
	\begin{axis}[
		axis lines = middle,
		xlabel = $x$,
		ylabel = {$y$},
		samples = 100,
		domain = -2:2,
		ymin = 0, ymax = 2,
		xmin = -2, xmax = 2,
		width=8cm, height=4.944cm,
		grid = both,
		grid style = {line width=.1pt, draw=gray!10},
		major grid style = {line width=.2pt,draw=gray!50},
		minor tick num = 5,
		enlargelimits={abs=0.5},
	]
		\addplot[blue, thick] {max(-x,0) + max(1-x,0)};
		\node at (axis cs:0.2,1.5) [anchor=north west] {$f(x) + g(x)$};
	\end{axis}
  \end{tikzpicture}
\caption{The image of $ f(x)+g(x) $.}
\label{fig: f+g}
\end{figure}
The corresponding saddle-point problem is $\min_{x\in \mathbb{R}}\max_{y\in \mathbb{R}}\{xy+f(x)-g^*(y)\}$, where $ g^*(y) =\max _x[y x-\max \{1-x, 0\}] = y+\delta_{[-1,0]}(y)$ and $\delta_{[-1,0]}$ is the indicator function of the interval $[-1,0]$. We denote the optimal set $ X^* = [1,+\infty) = \arg\min_{x\mathbb{\in \mathbb{R}}} f(x)+g(x) $ and the primal-dual objective function $F(x,y) = xy+f(x)-g^*(y)$. Let us solve the saddle-point set $\{ (x^*,y^*):F(x^*, y)\leq F(x^*, y^*)\leq F(x, y^*)\text{ for all }(x, y)\in \mathbb{R}^2 \}$ of $F(x,y)$. Fix $x^*\geq 1$; then
\begin{align*}
\max_{-1\leq y\leq 0}\{ x^*y-g^*(y) \} = \max_{-1\leq y\leq 0} \{ x^*y-y \}
=\left\{\begin{array}{ll}
0, &x^*>1, y=0,\\
0, &x^*=1, y\in [-1,0].
\end{array}\right.
\end{align*}
If $x^* = 1$, assume that $-1\leq y^*< 0$, then $F(1,y^*)=0 $, while $F(x,y^*) = y^*(x-1)+\max\{ -x,0 \}$, and there exists $x=2$ such that $F(2,y^*)=y^*<F(1,y^*)$, which leads to a contradiction. Therefore, the saddle-point set is $ \Omega = \{ (x^*, y^*):x^*\geq 1,y^*=0 \}$.

When $\mathcal{M} = \begin{pmatrix} 1 & -1 \\ -1 & 1 \end{pmatrix}$, as shown in Figure~\ref{fig: Projection onto the saddle point set}, the minimization $\arg\min_{\bu \in \mathrm{Fix}(\mathcal{T})} \|\bu-\mathbf{a}\|_\mathcal{M}^2 = \arg\min_{(x,y)\in \Omega}(x-x_a-(y-y_a))^2$ can be interpreted as finding the $\mathcal{M}$-projection, namely, the projection of the anchor point $\mathbf{a}=(x_a, y_a)$ onto the line $x-y=x_a-y_a$. Therefore, the $\mathcal{M}$-projection of $\mathbf{a}$ onto $\Omega$ is uniquely determined,
\begin{equation}\label{eq: projection onto the saddle point set}
P_{\Omega}^\mathcal{M} (\mathbf{a}) =\begin{cases}
(1,0), x_a-y_a-1\leq 0,\\
(x_a-y_a,0), x_a-y_a-1> 0.
\end{cases}
\end{equation}
\begin{figure}[htbp]
\centering
\begin{tikzpicture}[scale=.8]
	\fill[blue!20] (-0.8,-1.6) -- (5,4) -- (-0.8,4) -- cycle;
	\draw[blue, thick] (-0.8,-1.6) -- (5,4) node[pos=0.25, above right, xshift=-15, yshift= 50] {$x - y - 1 \leq 0$};
    % Draw axes
    \draw[->] (-0.6,0) -- (4.6,0) node[right] {$x$};
    \draw[->] (0,-1.5) -- (0,4.5) node[above] {$y$};

    % Draw the line x - y - 1 = 0
    
    % Shade the region x - y - 1 <= 0
    % \fill[blue!20] (1,0) -- (4,3) -- (4,4) -- (0,4) -- (0,0) -- cycle;

    % Draw points O, S(2,0), R(3,1)
    \fill[black] (0,0) circle (1pt) node[below left] {$O$};
    \fill[red] (2,0) circle (1pt) node[below] {$S(2,0)$};
    \fill[blue] (3,1) circle (1pt) node[right] {$R(3,1)$};

    % Draw lines from S and R to the x-axis
	%连接SR
	\draw[gray, thick] (2,0) -- (3,1);
    % \draw[gray, thick] (2,0) -- (2,-0.2);
    % \draw[gray, thick] (3,1) -- (3,-0.2);
\end{tikzpicture}
\caption{$ \mathcal{M}$-projection onto the saddle-point set.}
\label{fig: Projection onto the saddle point set}
\end{figure}

We set the total iteration number $N=1000$. As shown in Figure~\ref{subfig: a}, with the anchor point fixed at $\mathbf{a} = (12,9)$, the sequence $\mathbf{u}^k = (x^k,y^k)$ generated by HPPP converges to the same point $\mathbf{u}^* = (3,0)$, regardless of the initial point, which verifies Proposition~\ref{cor:bigthm cor} and~\eqref{eq: projection onto the saddle point set}. HPPP is also an implicitly regularized method. In contrast, Figure~\ref{subfig: d} shows that the sequence generated by PPP oscillates around the limit $(1.8,0)$. However, a quantitative mathematical characterization of the limit of the PPP sequence remains elusive.
\begin{figure}[htbp]
\centering
\begin{subfigure}[b]{.4\linewidth}
\centering
\includegraphics[height=5cm]{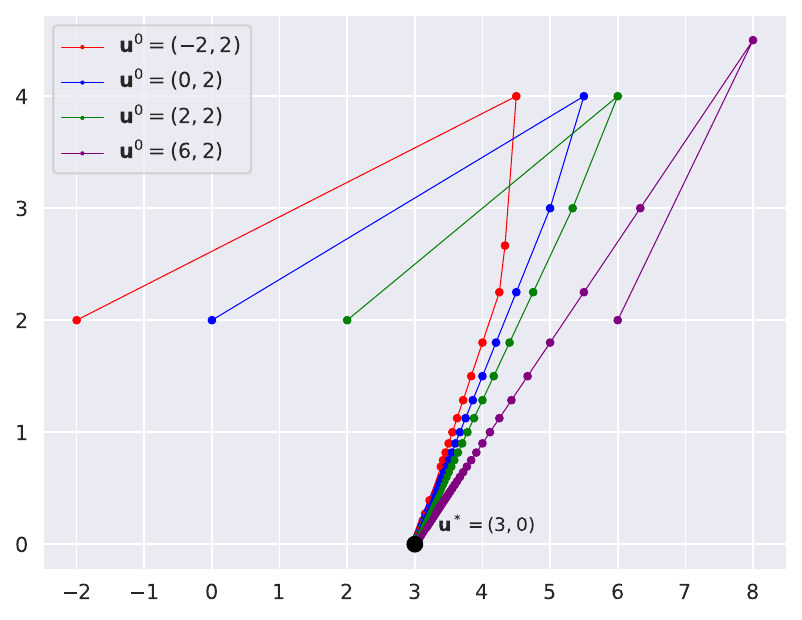}
\caption{}
\label{subfig: a}
\end{subfigure}
\begin{subfigure}[b]{.4\linewidth}
	\centering
	\includegraphics[height=5cm]{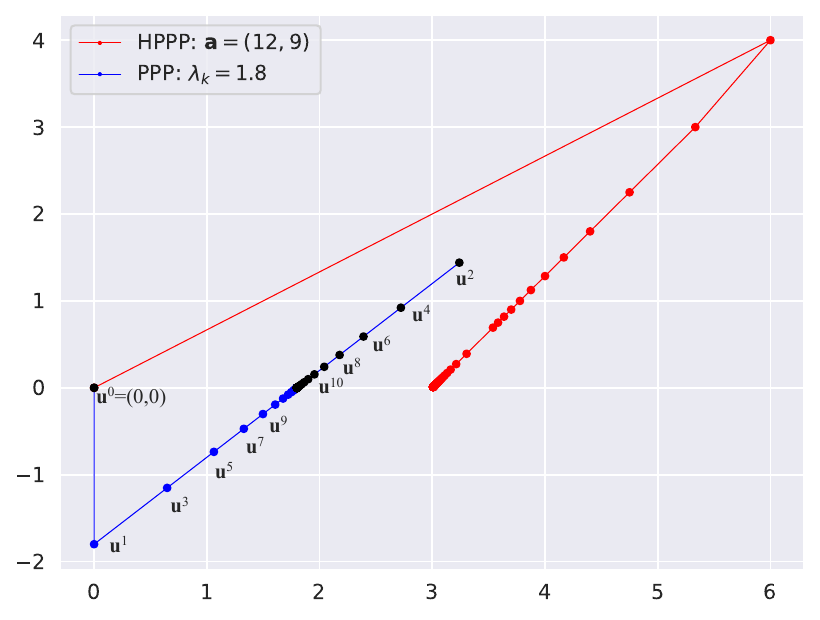}
	\caption{}
	\label{subfig: d}
\end{subfigure}
\caption{Trajectory of $ \bu^k = (x^k,y^k) $ generated by HPPP or PPP.}
\label{fig: regularity}
\end{figure}
\subsection{A 2D toy example}
We consider a $\frac{1}{\gamma}$-contractive operator $T_\theta: \mathbb{R}^2\to \mathbb{R}^2$ from~\cite{Park2022} for any $\mathbf{x} = (x_1,x_2)^{\mathrm{T}}\in \mathbb{R}^2$, 
\begin{equation*}
	T_\theta \mathbf{x} = \frac{1}{\gamma }\begin{pmatrix}
		\cos\theta & -\sin\theta \\
		\sin\theta & \cos\theta
	\end{pmatrix} \mathbf{x}.
\end{equation*}  
We compare the Picard iteration $\mathbf{x}^{k+1} = T_\theta \mathbf{x}^k$, PPP ($\lambda_k =0.1$), HPPP ($\mu_k =\frac{1}{k+1}$), and restarted HPPP ($q=50$). Set $\mathcal{A} =\begin{pmatrix}
	-T_\theta & I \\
	-I & I
\end{pmatrix}$ and $\mathcal{M} = \begin{pmatrix}
	\frac{1}{\tau}I & -I \\
	-I & \frac{1}{s}I
\end{pmatrix}
$ with $\tau = 0.8, s =1.25$, and initial points $\mathbf{x}^0 = \mathbf{y}^0 = \mathbf{x}_a = \mathbf{y}_a = (1,0)^{\mathrm{T}}$, where $\mathbf{a} = \mathbf{u}^0$. Figure~\ref{fig: 2d example} shows that HPPP and restarted HPPP indeed provide accelerated residual decay.
\begin{figure}[htbp]
	\centering
	\begin{subfigure}[b]{.45\linewidth}
	\centering
	\includegraphics[height=5.5cm]{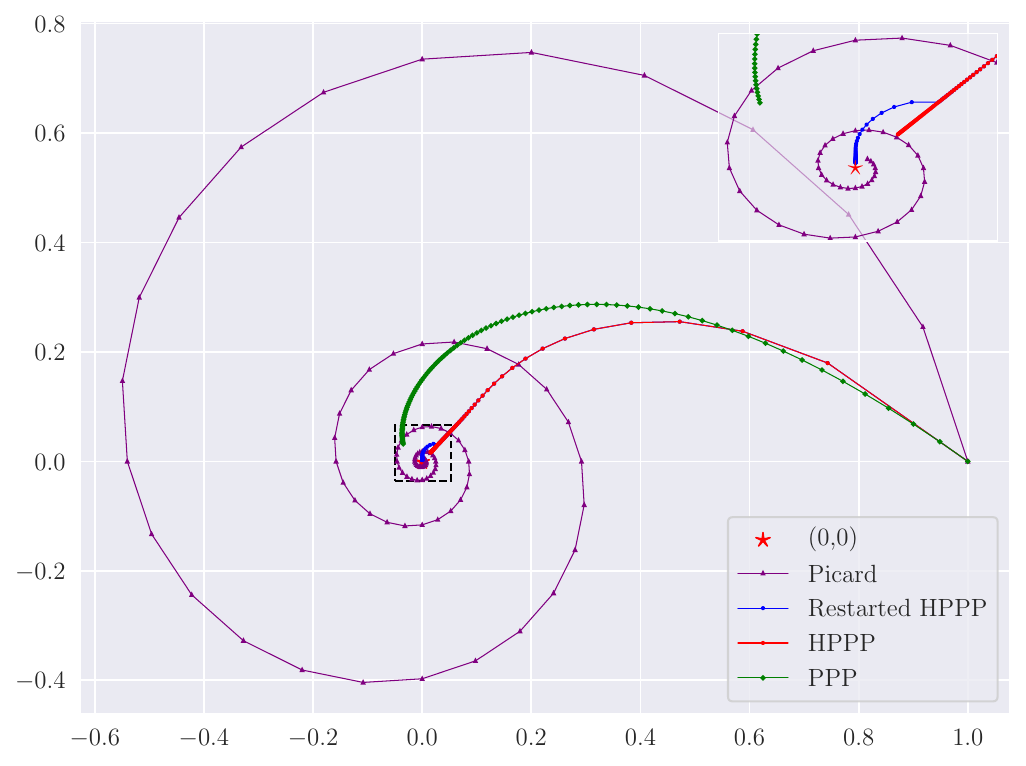}
	\caption{}
	\label{subfig: 2d_Trajectories}
	\end{subfigure}
	\begin{subfigure}[b]{.45\linewidth}
		\centering
		\includegraphics[height=5.5cm]{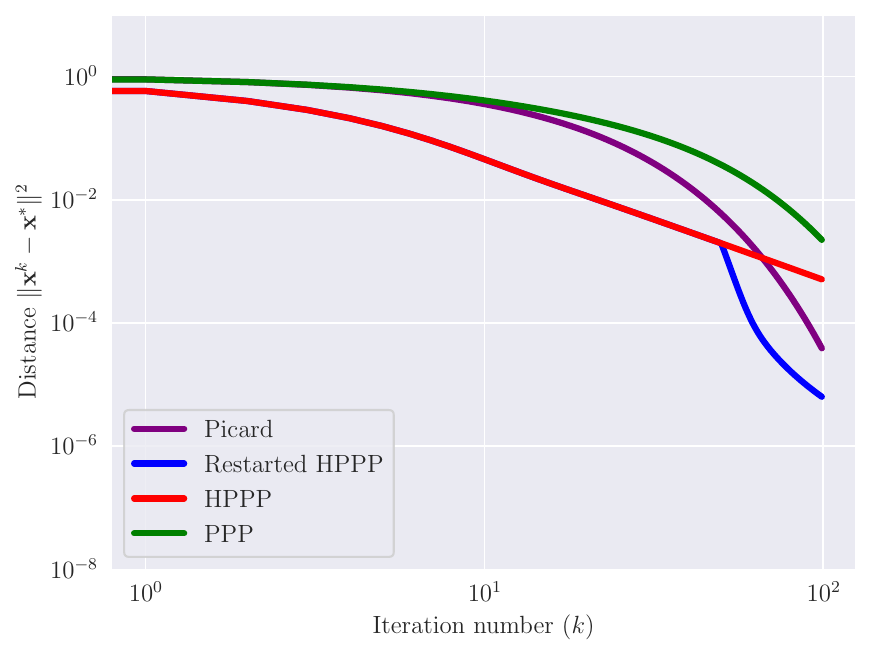}
		\caption{}
		\label{subfig: 2ddists}
	\end{subfigure}
	\begin{subfigure}[b]{.45\linewidth}
		\centering
		\includegraphics[height=5.5cm]{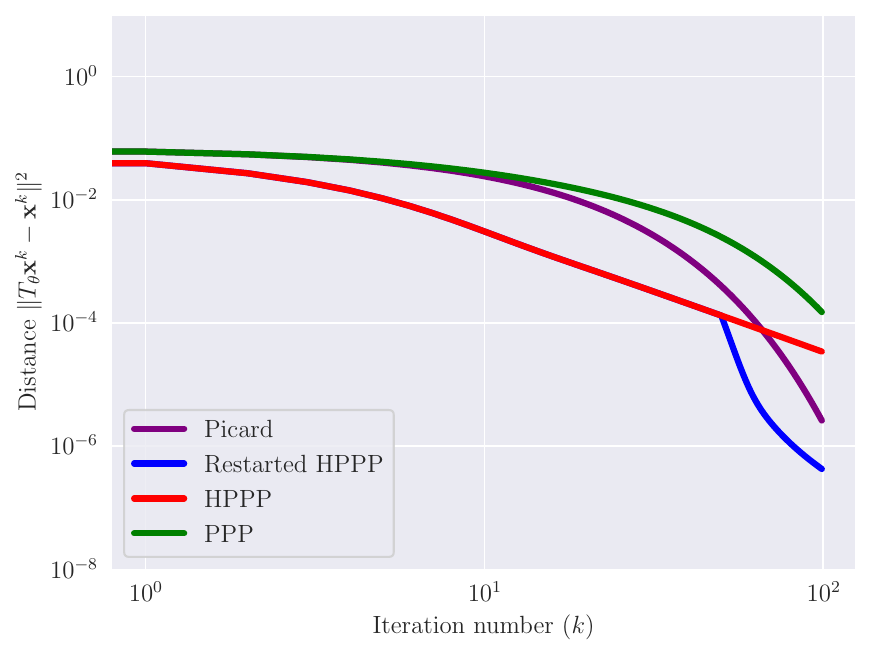}
		\caption{}
		\label{subfig: 2dres}
	\end{subfigure}
	\begin{subfigure}[b]{.45\linewidth}
		\centering
		\includegraphics[height=5.5cm]{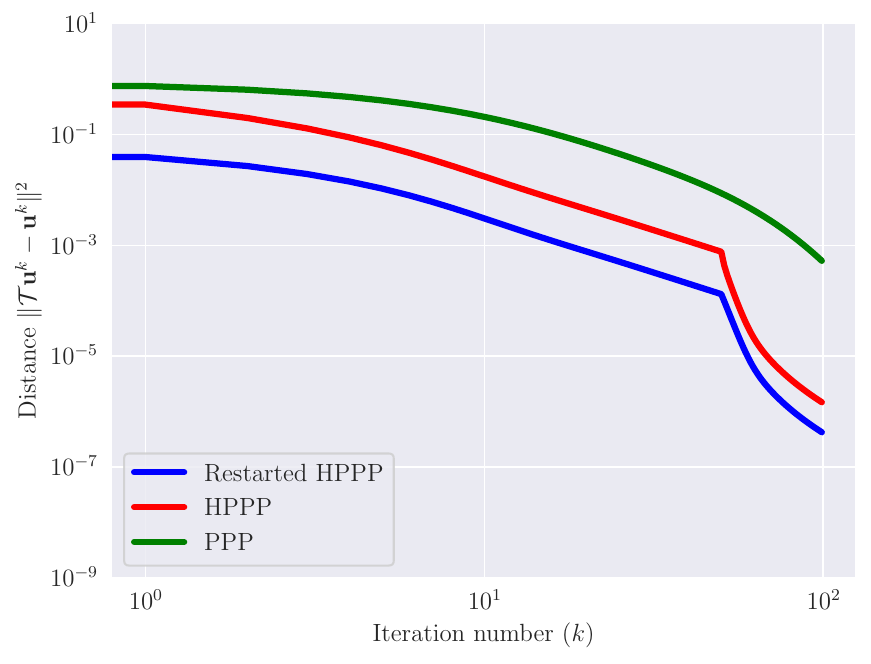}
		\caption{}
		\label{subfig: 2dres1}
	\end{subfigure}
	\caption{Trajectories, distance to solution, and two residuals of iterates for the 2D toy example. Here $\gamma = 1/0.95, \theta =15^\circ, N=100$.}
	\label{fig: 2d example}
	\end{figure}
\subsection{An infinite-dimensional example}
We consider an infinite-dimensional problem on $\ell^2$ from~\cite{bc24}, the space of square summable sequences. For any $\mathbf{x} =(x_0,x_1,x_2,\ldots)\in \ell^2$, the norm is  $\left\| \mathbf{x} \right\| =\left(\sum_{k\in \mathbb{N}} \left| x_i \right|^2  \right)^\frac{1}{2} $. We consider the following operator inclusion problem:
\begin{equation}\label{Infinite Dimensional Example}
	\mathbf{0} \in \mathcal{A}_1 \mathbf{x}+\mathcal{A}_2 \mathbf{x},  \mathcal{A}_1 = -\mathcal{S}-\mathbf{b}, \mathcal{A}_2 =I,
\end{equation}
where $\mathcal{S}(\mathbf{x}) = (0,x_0,x_1,\ldots)$ is the right-shift operator and $\mathbf{b} =(1,-1,0,\ldots)$. Since $\mathcal{A}_1,\mathcal{A}_2$ are nonexpansive, $\mathcal{A}_1,\mathcal{A}_2$ are maximal monotone~\cite[Example 20.29]{BCombettes}. Additionally, by solving the above inclusion problem~\eqref{Infinite Dimensional Example}, we obtain $\mathbf{x}^* = (1,0,\ldots)$. Using $\mathcal{A} =\begin{pmatrix}
	\mathcal{A}_1 & I \\
	-I & I
\end{pmatrix}$ and positive semidefinite preconditioner $\mathcal{M} = \begin{pmatrix}
	2I & -I \\
	-I & \frac{1}{2}I
\end{pmatrix}
$, we compare the PPP iteration with $\lambda_k = 0.5,1,1.8$. We set $N = 10^3,\tau =0.5,s =2$ and initialize $\mathbf{x}^ 0 =(0.9,1,0,\ldots)$ for HPPP and PPP, and the anchor point is the same as the initial point, i.e., $\mathbf{a} = (\mathbf{x}^0,\mathbf{x}^0)$. For this example, we present visual results about the residual and the distance to the solution. As illustrated in Figure~\ref{fig:Infinite Dimensional Example}, HPPP outperforms PPP in terms of both  $\left\| \mathbf{x}^k-\mathbf{x}^* \right\| $ and $\left\| \mathbf{u}^{k+1}-\mathbf{u}^k \right\| $.
\begin{figure}[htbp]
    \centering
    \begin{minipage}[b]{.49\linewidth}
        \centering
        \includegraphics[height=5.4cm]{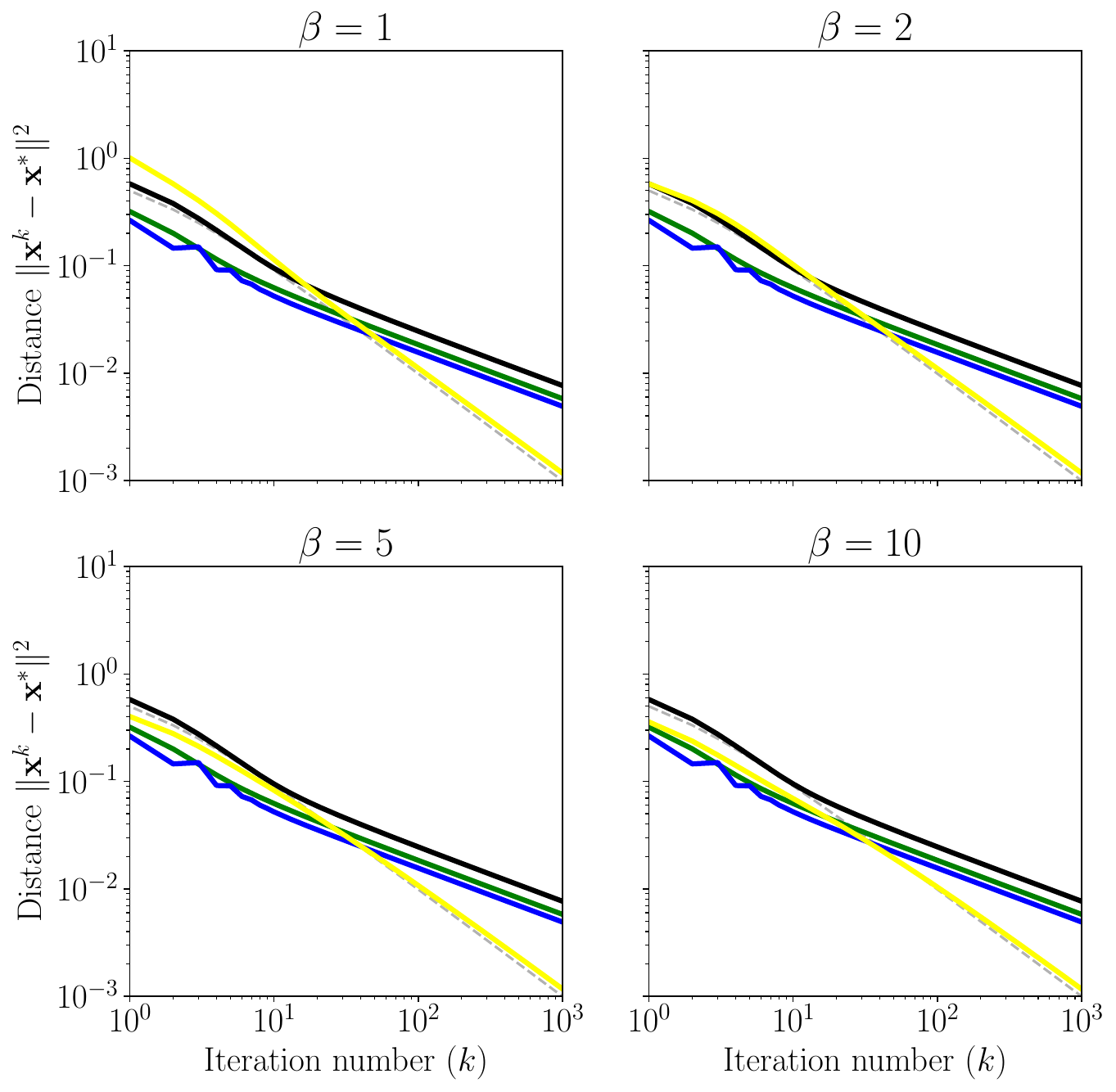}
        \subcaption{Distance to solution}
        \label{fig: experiment_HPPP_PPP_dists_tau_0.5}
    \end{minipage}
	\hspace{0.0001\linewidth} % 减小间距
    \begin{minipage}[b]{.49\linewidth}
        \centering
        \includegraphics[height=5.4cm]{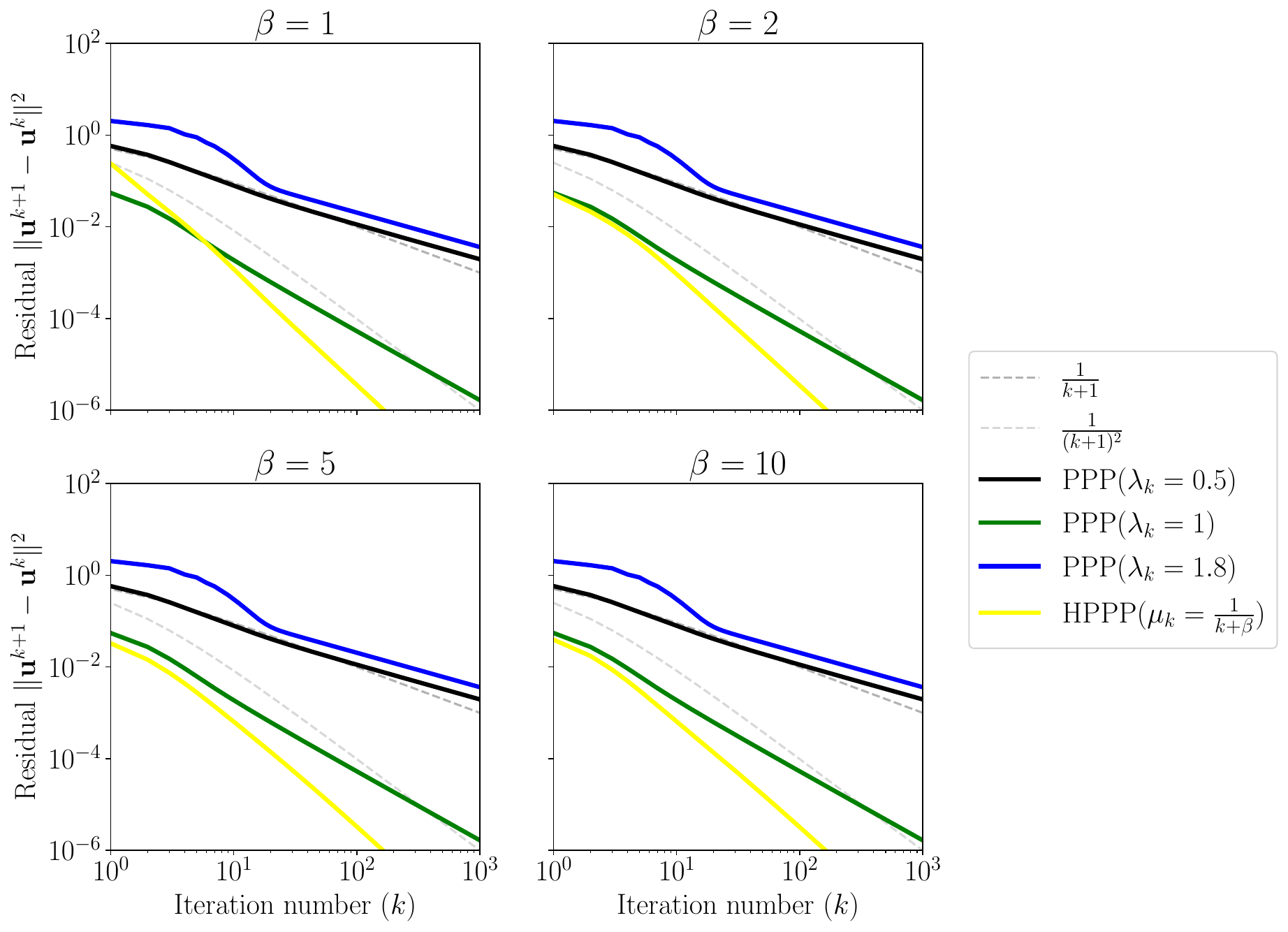}%height=5.5cm不能超过.49\linewidth的距离，否则有黑边
        \subcaption{Residual comparison}
        \label{fig: experiment_HPPP_PPP_resid_tau_0.5}
    \end{minipage}
	\caption{Comparison with PPP to solve an infinite-dimensional problem in $\ell^2$ space.}
	\label{fig:Infinite Dimensional Example}
\end{figure}
\subsection{Image deblurring}
First, we compared CP~\cite{Chambolle2011}, PPP~\cite{Bredies2022}, and HPPP for TV-regularized IR problems. For TV-regularized IR problems, 
\begin{equation}\label{problem: TV deblurring}
\min_{\mathbf{x}\in \mathbb{R}^n} \frac{\lambda}{2} \left\| \bA \bx-\mathbf{y} \right\|_2^2+\beta \left\| \nabla \mathbf{x} \right\|_1,
\end{equation}
where $ \mathbf{y} $ is the degraded image, $ \bA $ is a linear operator, and $\lambda, \beta $ are balance coefficients.

We use a 2D Gaussian function with a standard deviation of 1.6  to convolve 10 test grayscale images, and finally obtain the degraded images with an additive white Gaussian noise  with the noise level $ \sigma =2.55 $.  Firstly, we compared three algorithms, CP, PPP, and the proposed HPPP. All the algorithms use the degraded images as initial points. We  calculate the norm $ K =\left\| \nabla \right\| = 1.75 $, and choose the total iteration $ N=400 $, balance coefficients $ \lambda =2,\beta = 5\times 10^{-4} $. Their parameters are given in Table~\ref{table: parameters of CP, PPP, and HPPP}. Both GraRED-P$ ^3 $  and GraRED-HP$ ^3 $ use DnCNN; other parameters are used as follows:
\begin{itemize}
	\item GraRED-P$ ^3 $ : $ \tau = 1, s =1, \lambda_k = 0.2, \lambda = 20$;
	\item GraRED-HP$ ^3 $ : $ \tau = 1, s =1, \mu_k = 1/(k+2), \lambda = 20, \bx_a = \by, \by_a = \mathbf{0}$.
\end{itemize}

Table~\ref{table: PSNR about CP, PPP and HPPP} shows PSNR (dB) of restoration results on CP, PPP, HPPP, GraRED-P$ ^3 $, and GraRED-HP$ ^3 $. The performance of these methods is evaluated using PSNR measure. The best recovery results are highlighted in bold. From Table~\ref{table: PSNR about CP, PPP and HPPP}, GraRED-P$ ^3 $ and GraRED-HP$ ^3 $ are better than classic algorithms with explicit TV regularization, which demonstrates that the implicit regularizer is more powerful for regularizing inverse imaging problems. We visualize the numerical comparison between GraRED-P$ ^3 $, GraRED-HP$ ^3 $, CP, PPP, and HPPP in Figure~\ref{fig: comparison with classic TV regularization}. We further compare the robustness of the initial points between the proposed HPPP, CP, and PPP with TV regularization.  As shown in Figure~\ref{fig: Chuzhi infulence}, we plot their respective evolutions of PSNR values for iterations for \textbf{Butterfly} and \textbf{Parrots} with 10 different random initial points, here we choose the degraded image $\mathbf{y}$ as the anchor point. Once the anchor point is chosen, the proposed HPPP algorithm is robust to random perturbations of the initial points, which achieves stable recovery.  
\begin{table}[htbp]
	\centering
	\caption{Parameters on CP, PPP, and HPP with TV regularization for image deblurring.}
	\label{table: parameters of CP, PPP, and HPPP}
\scalebox{0.8}{\begin{tabular}{|c|c|}
	\hline
	CP & $ \tau = s =1/K = 0.57  $ \\
	\hline
	PPP & $ \tau = s =1/K = 0.57, \lambda_k = 1.95 $ or $ \lambda_k =1.2 $ \\
	\hline
	HPPP & $ \tau = s =1/K = 0.57, \mu_{k}=\frac{1}{k+2}, \bx_a = \bA^\mathrm{T}\mathbf{y}, \by_a = 0\cdot \nabla \bx_a  $\\
	\hline
\end{tabular}}
\end{table}
\begin{table}[htbp]
	\centering
	\caption{Deblurring results of grayscale images compared with CP, PPP, and HPPP about the Gauss blurring kernel.}
	\label{table: PSNR about CP, PPP and HPPP}
	\scalebox{0.8}{
		\begin{tabular}{|c|c|c|c|c|c|c|c|c|c|}
			\hline 
			& Cameraman & House & Pepper & Starfish & Butterfly & Craft & Parrots & Barbara & Boat \\
			\hline
			CP & 26.04 & 30.86 & 25.99 & 27.53 & 27.85 & 25.51 & 26.88 &  24.46 & 28.99 \\
			\hline
			PPP & {26.05} & 30.85 & 25.99 & 27.53 & 27.85& 25.51 & {26.88}  & 24.46& 28.98\\
			\hline
			HPPP & 26.00 & {31.39} &{26.05} & {27.65} & {27.99} & {25.51} & 26.82  & {24.51} & {29.09}\\
			\hline
			GraRED-P$ ^3 $ & \textbf{26.87} & {32.51} & {28.34} & \textbf{29.16} & \textbf{29.70} & {26.90} & \textbf{27.95} & {24.66} & {29.68}\\
			\hline
			GraRED-HP$ ^3 $ & 26.85 & \textbf{32.51} & \textbf{28.80} & {29.15} & {29.67} & \textbf{27.03} & 27.89 &  \textbf{24.66} & \textbf{29.99}\\
			\hline
	\end{tabular}}
\end{table}
\begin{figure}[htbp]
	\centering
	\captionsetup[subfigure]{justification=centering}
	\begin{subfigure}[b]{.22\linewidth}
		\centering
		\begin{tikzpicture}[spy using outlines={rectangle,blue,magnification=5,size=1.2cm, connect spies}]
		\node {\includegraphics[height=3cm]{./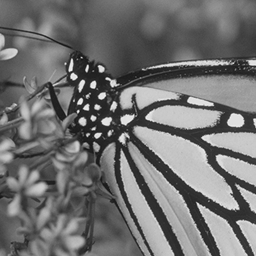}};
		\spy on (0.25,-0.4) in node [left] at (1.5,.9);
		\end{tikzpicture}
		\caption{Clean image  \\ ~}
	\end{subfigure}
	\begin{subfigure}[b]{.22\linewidth}
		\centering
		\begin{tikzpicture}[spy using outlines={rectangle,blue,magnification=5,size=1.2cm, connect spies}]
		\node {\includegraphics[height=3cm]{./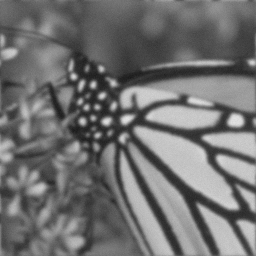}};
		\spy on (0.25,-0.4) in node [left] at (1.5,.9);
		\end{tikzpicture}
		\caption{Degraded image \\ ~}
	\end{subfigure}
	\begin{subfigure}[b]{.22\linewidth}
		\centering
		\begin{tikzpicture}[spy using outlines={rectangle,blue,magnification=5,size=1.2cm, connect spies}]
		\node {\includegraphics[height=3cm]{./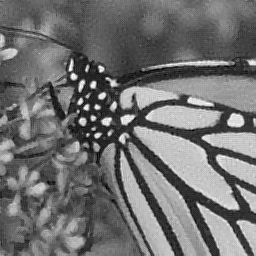}};
		\spy on (0.25,-0.4) in node [left] at (1.5,.9);
		\end{tikzpicture}
		\caption{CP ($27.85$dB) \\ ~}
	\end{subfigure}
	\begin{subfigure}[b]{.22\linewidth}
		\centering
		\begin{tikzpicture}[spy using outlines={rectangle,blue,magnification=5,size=1.2cm, connect spies}]
		\node {\includegraphics[height=3cm]{./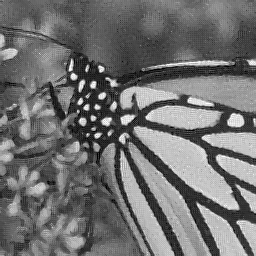}};
		\spy on (0.25,-0.4) in node [left] at  (1.5,.9);
		\end{tikzpicture}
		\caption{$ \lambda_k =1.95 $: PPP ($27.85$dB)}
	\end{subfigure}
	\begin{subfigure}[b]{.22\linewidth}
		\centering
		\begin{tikzpicture}[spy using outlines={rectangle,blue,magnification=5,size=1.2cm, connect spies}]
		\node {\includegraphics[height=3cm]{./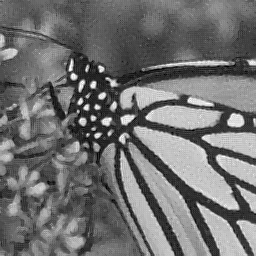}};
		\spy on (0.25,-0.4) in node [left] at (1.5,.9);
		\end{tikzpicture}
		\caption{$ \lambda_k =1.2 $: PPP ($27.85$dB)}
	\end{subfigure}
	\begin{subfigure}[b]{.22\linewidth}
		\centering
		\begin{tikzpicture}[spy using outlines={rectangle,blue,magnification=5,size=1.2cm, connect spies}]
		\node {\includegraphics[height=3cm]{./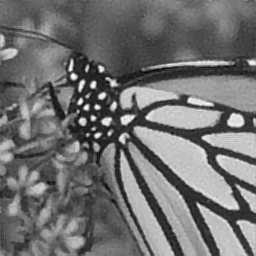}};
		\spy on (0.25,-0.4) in node [left] at  (1.5,.9);
		\end{tikzpicture}
		\caption{HPPP ($27.99$dB) \\ ~}
	\end{subfigure}
	\begin{subfigure}[b]{.22\linewidth}
		\centering
		\begin{tikzpicture}[spy using outlines={rectangle,blue,magnification=5,size=1.2cm, connect spies}]
		\node {\includegraphics[height=3cm]{./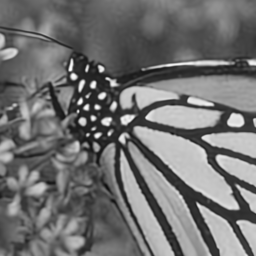}};
		\spy on (0.25,-0.4) in node [left] at  (1.5,.9);
		\end{tikzpicture}
		\caption{GraRED-P$ ^3 $ ($29.70$dB)}
	\end{subfigure}
	\begin{subfigure}[b]{.22\linewidth}
		\centering
		\begin{tikzpicture}[spy using outlines={rectangle,blue,magnification=5,size=1.2cm, connect spies}]
		\node {\includegraphics[height=3cm]{./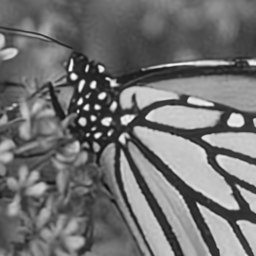}};
		\spy on (0.25,-0.4) in node [left] at  (1.5,.9);
		\end{tikzpicture}
		\caption{GraRED-HP$ ^3 $ ($29.67$dB)}
	\end{subfigure}
	\caption{Deblurring  results of \textbf{Butterfly} degraded by the Gaussian blur kernel with noise level $ \sigma = 2.55 $.}
	\label{fig: comparison with classic TV regularization}
\end{figure}
\begin{figure}[htbp]
    \centering
    \begin{subfigure}[b]{.45\linewidth}
        \centering
        \includegraphics[width=1\linewidth]{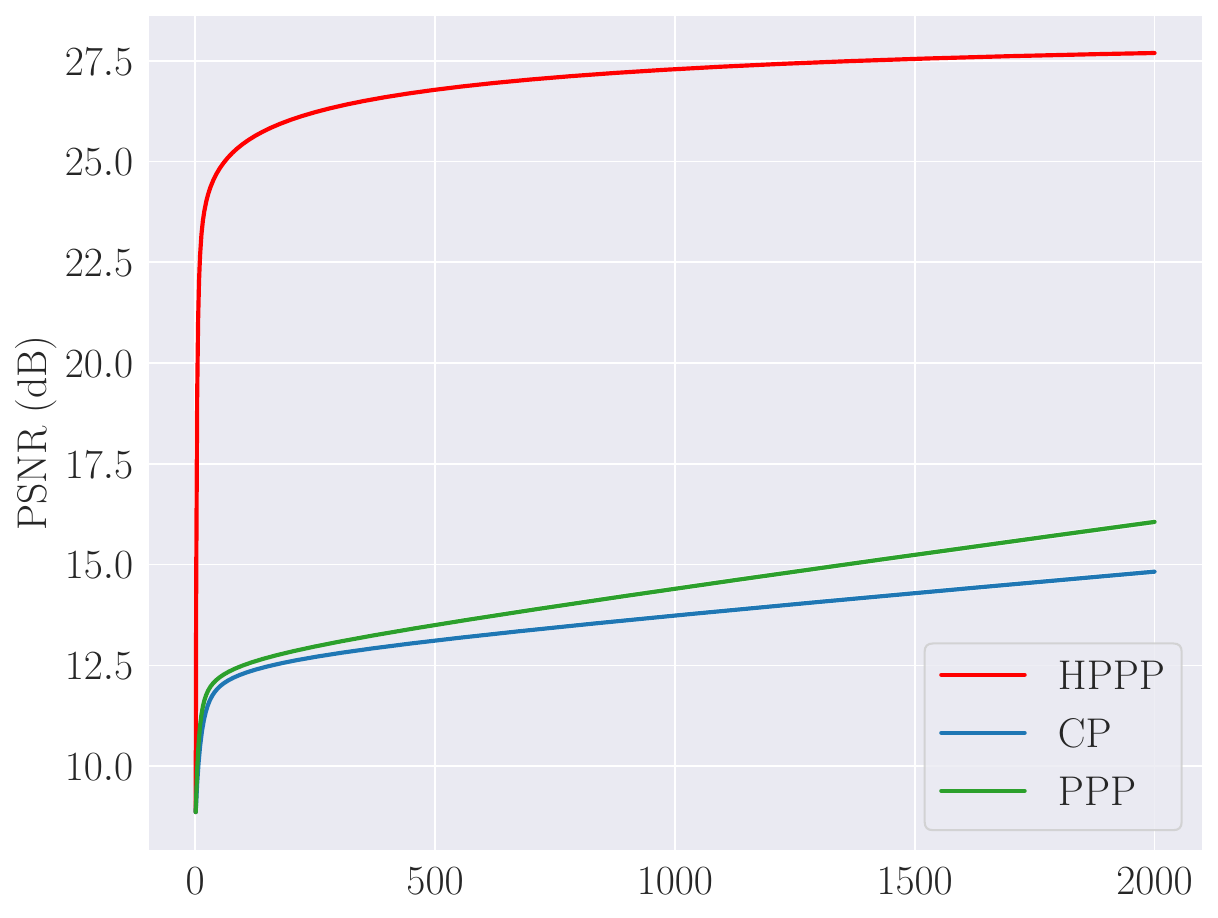}
        \caption{Butterfly}
        \label{fig: Butterfly_robustness_comparison}
    \end{subfigure}
    \begin{subfigure}[b]{.45\linewidth}
        \centering
        \includegraphics[width=1\linewidth]{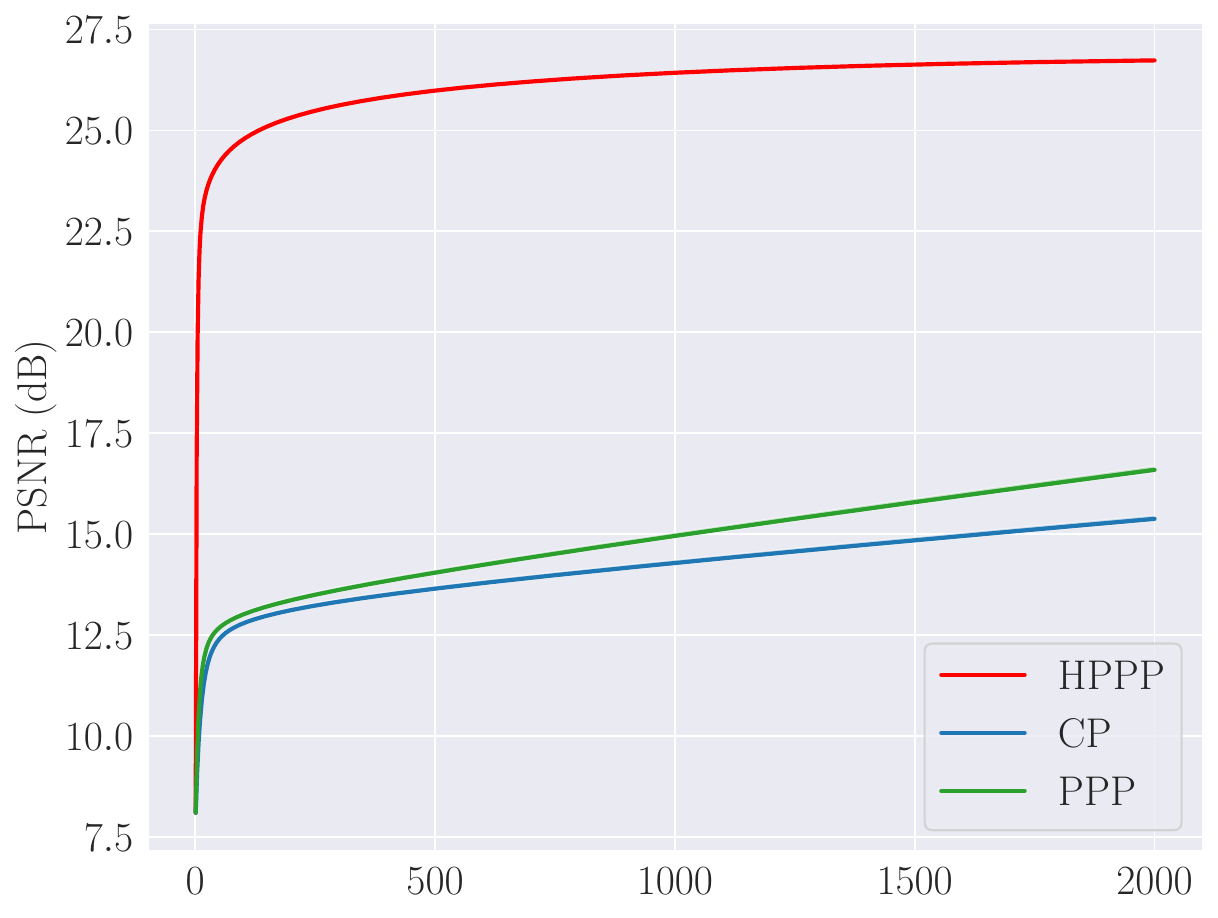}
        \caption{Parrots}
        \label{fig: Parrots_robustness_comparison}
    \end{subfigure}
    \caption{Robustness comparison with random initialization. Test images are degraded by the Gaussian blurring kernel with the noise level $ \sigma = 2.55 $.}
	\label{fig: Chuzhi infulence}
\end{figure}

Second, we compared RED and RED-PRO. Following~\cite{Romano2017,cohen2021regularization}, a $ 9\times 9 $ uniform point spread function (PSF) and a 2D Gaussian function
with a standard deviation of $ 1.6 $ are used to convolve test images. We finally obtained the degraded images with the noise level $ \sigma = \sqrt{2} $. The original RGB image is converted into the YCbCr image, all algorithms are applied to the luminance channel, and then the reconstruction image is returned to RGB space to obtain the final image. PSNR is measured on the luminance channel of the ground truth and the restored images. Table~\ref{table: RED, RED-PRO and ours} shows the PSNR values of the restoration results on RED, RED-PRO, RRP, GraRED-P$^3$, and GraRED-HP$^3$. From the deblurring experiment of Table~\ref{table: RED, RED-PRO and ours} and Figure~\ref{fig: comparison with RED and REDPRO}, GraRED-P$ ^3 $ and GraRED-HP$ ^3 $ achieve better performance than RED, RED-PRO, and RRP, which illustrates that KM or Halpern iteration used in PPP methods is effective. We further compare the differences among GraRED-P$^3$, GraRED-HP$^3$, and restarted HPPP under six different settings with $\lambda=40,\tau=s=1,\lambda_k=0.2,\mu_k=\frac{1}{k+1},N=500,q=100$. In Table~\ref{table:different tasks}, we report the average PSNR values of 15 test images using six different blur kernels. When the anchor point and the initial point coincide, GraRED-HP$^3$ achieves the best performance on some IR tasks. Moreover, by dynamically updating the anchor points, restarted HPPP further boosts the performance of GraRED-HP$^3$.
\begin{figure}[htbp]
    \centering
    \begin{subfigure}[b]{.45\linewidth}
        \centering
        \includegraphics[width=1\linewidth]{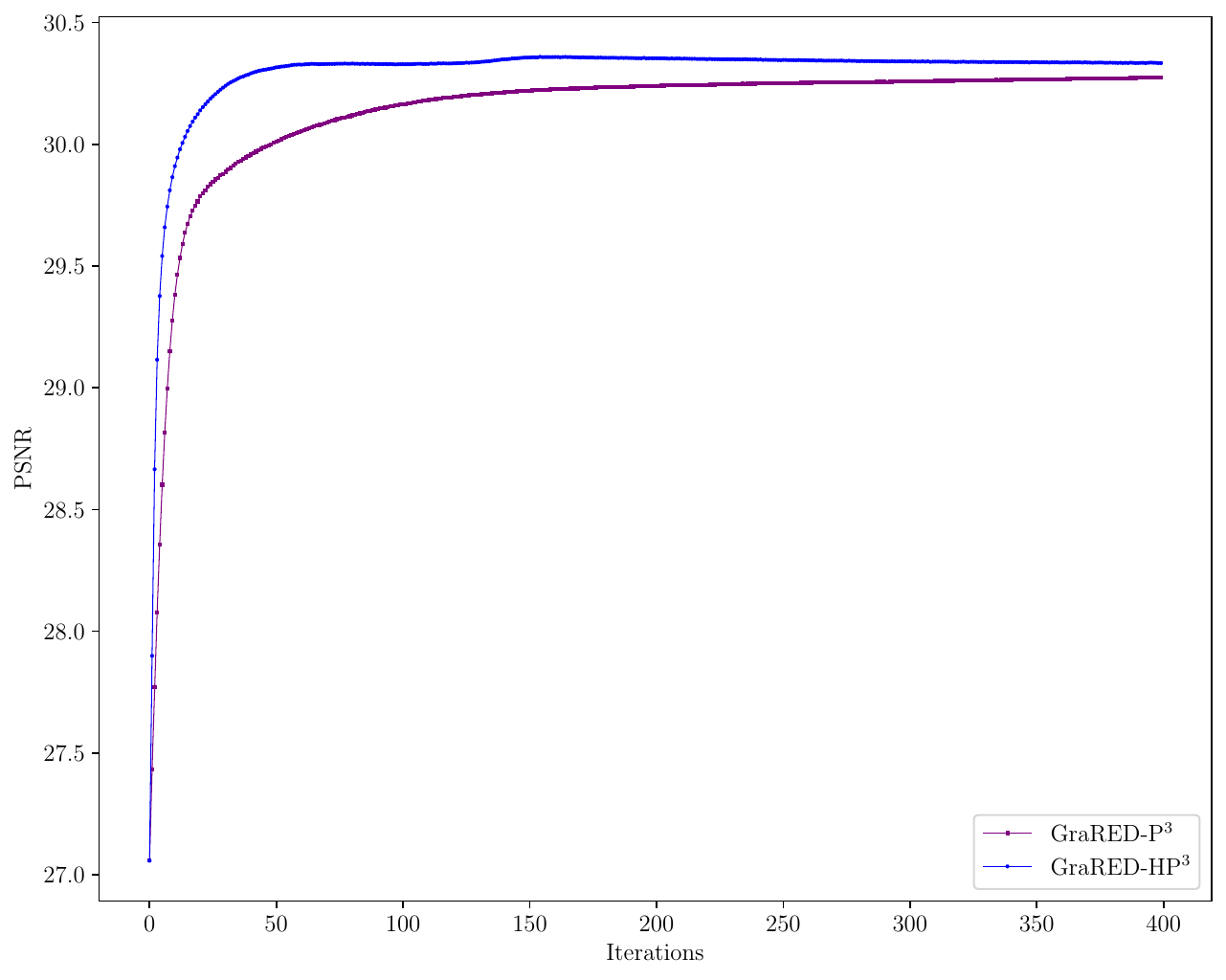}
        \caption{Recovery performance}
        \label{fig: PSNR_comparison}
    \end{subfigure}
    \begin{subfigure}[b]{.45\linewidth}
        \centering
        \includegraphics[width=1\linewidth]{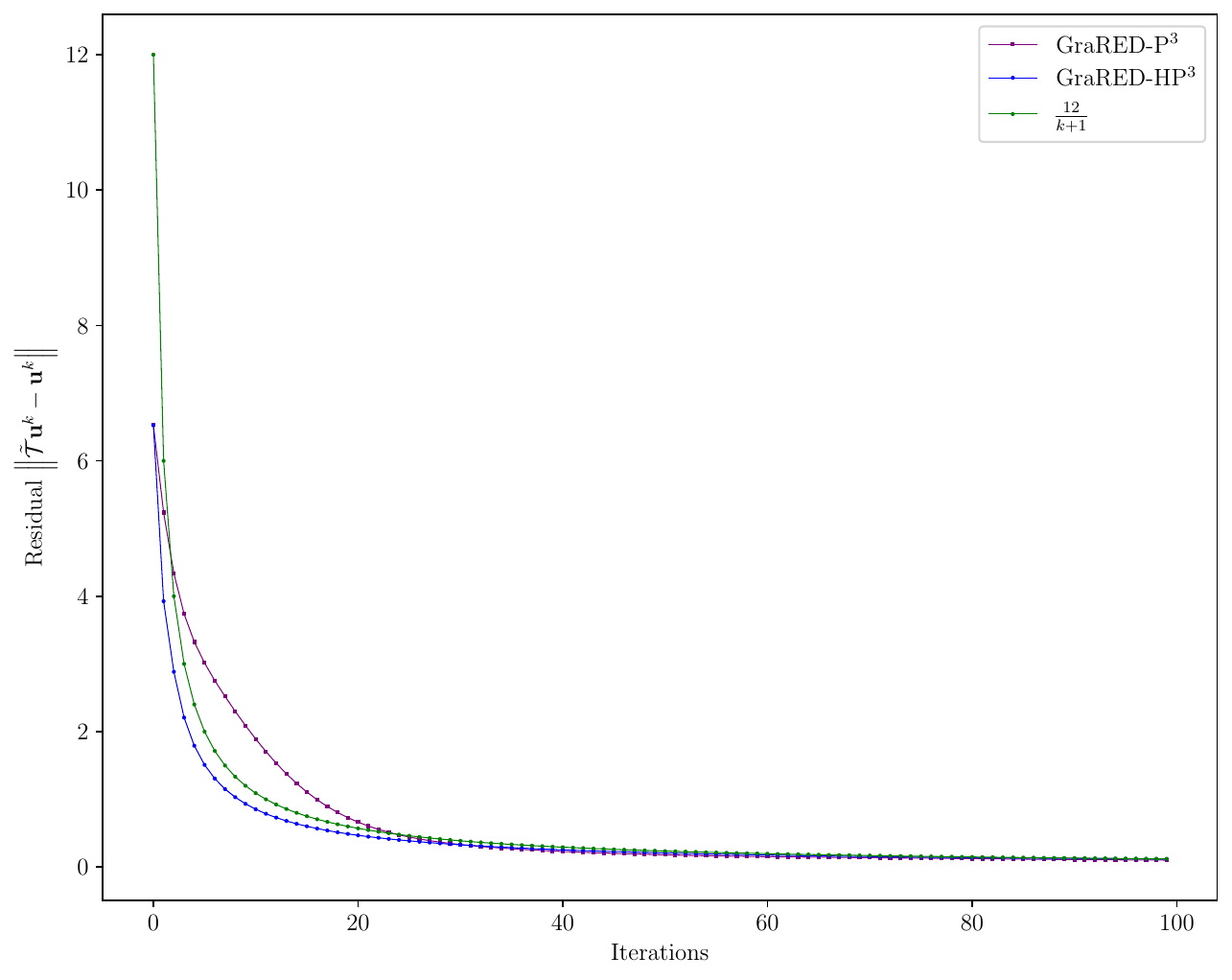}
        \caption{Fixed point residual}
        \label{fig: residual_comparison}
    \end{subfigure}
    \caption{Recovery performance and fixed-point residual comparison for the House image degraded by the Gaussian blurring kernel with the noise level $\sigma =5$. Here $D_1$ is used.}
	\label{fig: PSNR_and_residual_comparison}
\end{figure}

Additionally, we trained a FNE-DnCNN ($D_1$) using real spectral normalization~\cite{ryu2019plug}, which is employed in RED and RED-PRO. We also compared our results with state-of-the-art methods using advanced denoisers, such as DRUNet ($D_2$)~\cite{zhang2021plug} and GS ($D_3$)~\cite{hurault2022gradient,hurault2022proximal}. The PnP-DYS method~\cite{There-Operator-Splitting2024} uses the Davis-Yin splitting (DYS) algorithm. For GraRED-HP$^3$, the anchor point is the same as the initial point. The default parameters are presented in Table~\ref{table: parameters of GraRED-HPPP and restartHP3}. As shown in Table~\ref{table: comarison with state-of-the-art methods}, Restarted HPPP with advanced denoisers achieves competitive or even superior performance compared to state-of-the-art methods. In Figure~\ref{fig: PSNR_and_residual_comparison}, we compare  GraRED-P$^3$ ($\lambda_k = 0.1$) and GraRED-HP$^3$ ($\mathbf{a} = \mathbf{u}^0,\mu_k = \frac{1}{k+1}$). The results demonstrate that GraRED-HP$^3$ achieves better recovery performance and exhibits a faster fixed-point convergence rate than GraRED-P$^3$.
\begin{table}[!ht]
	\centering
	\caption{Recovery results are obtained by different algorithms with trainable nonlinear reaction-diffusion (TNRD) denoiser~\cite{Chen2017}. The best two results are highlighted in {\color{red}red} and {\color{blue}blue}, respectively.}
	\label{table: RED, RED-PRO and ours}
	\scalebox{0.8}{\begin{tabular}{|l|c|c|c|c|c|c|c|c|c|}
		\hline 
		\multirow{2}{*}{Algorithms}
		& \multicolumn{4}{c|}{Uniform kernel} & \multicolumn{4}{c|}{Gaussian kernel($ \sigma_k = 1.6 $)} \\
		\cline{2-9}
		& Bike & Butterfly & Flower & Hat  & Bike & Butterfly & Flower & Hat  \\
		\hline
		RED~\cite{Romano2017} & 26.10 &	{30.41} & 30.18 & 32.16	& 27.90 & {31.66} & 32.05 & 33.30 \\
		\hline
		RED-PRO(HSD)~\cite{cohen2021regularization}& 24.95 & 27.24 & 29.38 & 31.55 & 27.36 & 30.55 & 31.81 & 33.07  \\
		\hline 
		RRP~\cite{cohen2021regularization} & {26.48} & {30.64} & {30.46} & {32.25} & {28.02}& {{31.66}} &{32.08} & {33.26}  \\
		\hline
		GraRED-P$ ^3 $ & {\color{blue}{26.55}} & {\color{blue}30.72}	& {\color{blue}{30.67}} &{\color{red}{32.43}}  &{\color{red}{28.13}} &{\color{red}31.81}&{\color{red}{32.42}}& {\color{red}{33.51}} \\
		\hline
		GraRED-HP$ ^3 $& {\color{red}26.80} & {\color{red}30.88} & {\color{red}30.74} & {\color{blue}32.42} & {\color{blue}28.06} & {\color{blue}31.80} & {\color{blue}32.27} & {\color{blue}33.50}\\
		\hline
	\end{tabular}}
\end{table}
\begin{table}[htbp]
	\centering
	\caption{Parameters on GraRED-HP$^3$ and restarted HPPP with advanced denoisers.}
	\label{table: parameters of GraRED-HPPP and restartHP3}
\scalebox{0.8}{	\begin{tabular}{|c|c|}
	\hline
	GraRED-HP$^3(D_1)$ & $ \tau = 3,s =1/3,N=500 $ \\
	\hline
	Restarted HPPP$(D_2)$ & $ \tau = 1.2,s =1/1.2, N=30, q=10 $ \\
	\hline
	GraRED-HP$^3(D_3)$ & $ \tau = 1.8,s =1/1.8,N=500$ \\
	\hline
	Restarted HPPP$^3(D_3)$ & $ \tau = 1.8,s =1/1.8,N=500,q = 40$\\
	\hline
\end{tabular}}
\end{table}
\begin{table}[htbp]
    \centering
    \caption{Comparison with GraRED-P$^3$, GraRED-HP$^3$, and restarted HPPP with TNRD for different tasks.}
    \label{table:different tasks}
    \scalebox{0.8}{\begin{tabular}{|c|c|c|c|c|c|c|}
        \hline
       Kernels & \includegraphics[width=0.5cm]{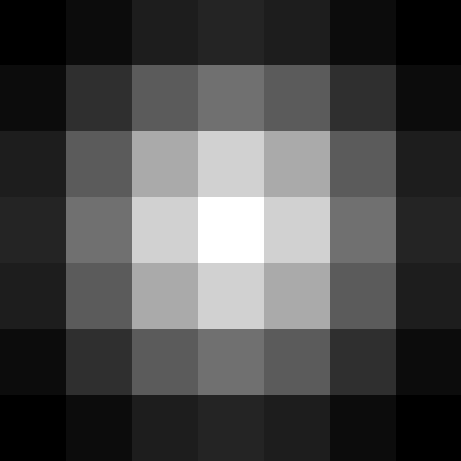} & \includegraphics[width=0.5cm]{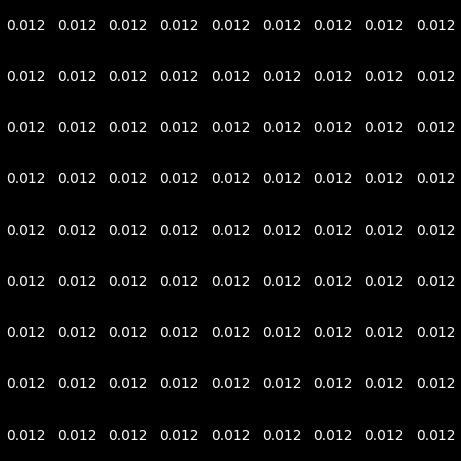} & \includegraphics[width=0.5cm]{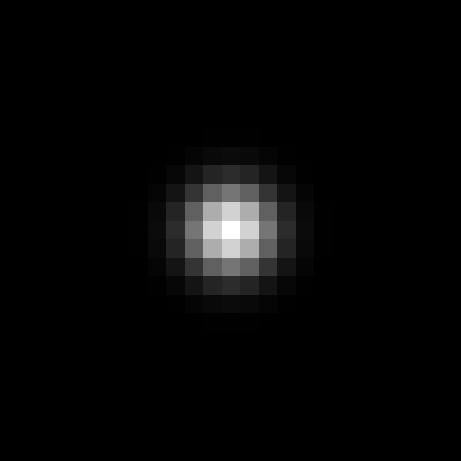} & \includegraphics[width=0.5cm]{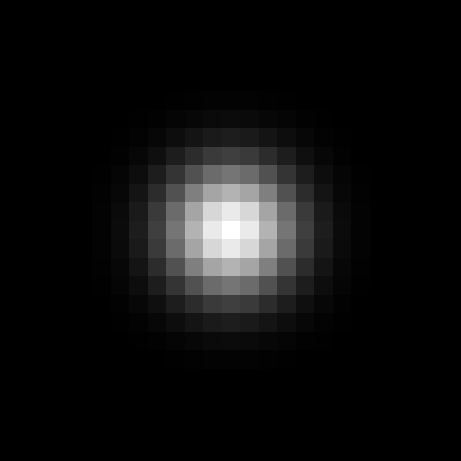} & \includegraphics[width=0.5cm]{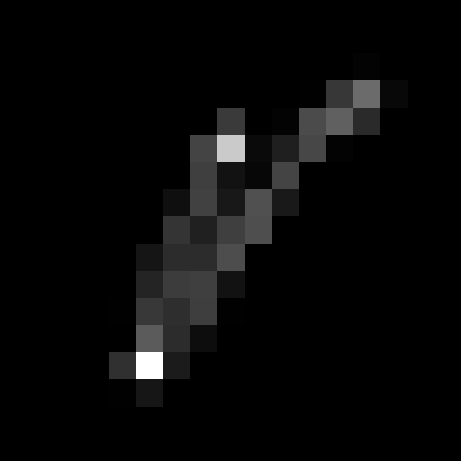} & \includegraphics[width=0.5cm]{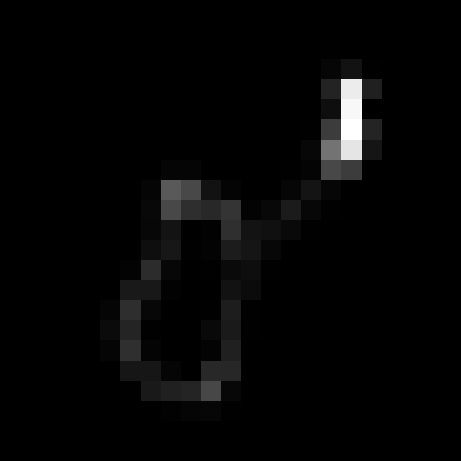} \\ 
        \hline
		noise level, scale & $\sigma=5,\times 3$ & $\sigma=2.55,\times 1$ & $\sigma=2.55,\times 1$ & $\sigma=2.55,\times 1$ & $\sigma=2.55,\times 1$ & $\sigma=2.55,\times 1$ \\
		\hline
        GraRED-P$^3$ & 27.49 & 29.25 & 30.92 & 28.19 & 33.53 & 33.73 \\
        \hline
        GraRED-HP$^3$ & 27.61 & 29.33 & 30.99 & 28.32 & 33.53 & 33.71 \\
        \hline
		Restarted HPPP & \textbf{27.66} & \textbf{29.36} & \textbf{31.00} & \textbf{28.34} & \textbf{33.53} & \textbf{33.73} \\
        \hline
    \end{tabular}}
\end{table}
\begin{table}[htbp]
	\centering
	\caption{Comparison with state-of-the-art methods. Test grayscale images are degraded by the Gaussian kernel with the noise level $\sigma = 2.55$. The best two results are highlighted in {\color{red}red} and {\color{blue}blue}, respectively.}
	\label{table: comarison with state-of-the-art methods}
	\scalebox{0.8}{
		\begin{tabular}{|c|c|c|c|c|c|c|c|c|c|}
			\hline 
			& Pepper  & Craft & Cameraman  & Couple & Man &  House & Starfish & Butterfly &  Boat \\
			\hline
			RED~\cite{Romano2017} & {27.19} & 26.00 & 26.46 & 29.21 & 30.16 & 32.41 & {28.46}  & 28.36 & 29.58\\
			\hline
			RED-PRO~\cite{cohen2021regularization} & 27.43 & {26.05} &{26.39} & {28.99} & {29.98} & {32.41} & 28.25  & {28.21} & {29.38}\\
			\hline
			DPIR~\cite{zhang2021plug} & 28.47 & 27.02 & {\color{red}27.63} & {\color{red}30.24} &
			{\color{red}30.89} & {\color{red}33.56} & {\color{blue}29.62} & {30.43} & {\color{red}30.54} \\
			\hline
			PnP-DRS~\cite{hurault2022proximal} & {\color{blue}29.79} & 26.92 & 27.46 & 29.70 & 29.59 & {\color{blue}33.25} & 29.94 & {\color{red}30.66} & 29.46 \\
			\hline 
			PnP-PGD~\cite{hurault2022gradient} & 27.18 & 26.28 & 26.73 & 29.17 & 29.14 & 32.84 & 29.14 & 29.87 & 30.37 \\ 
			\hline
			PnP-DYS~\cite{There-Operator-Splitting2024} & 27.22 & 26.27 & 26.62 & 28.97 & 29.04 & 32.87 & 29.06 & 29.88 & 28.68 \\
			\hline
			GraRED-HP$^3(D_1)$ & {29.11} & {27.05} & {27.16} & {29.37} & {30.46} & {32.43} & {28.77} &  {29.51} & {29.86}\\
			\hline
			Restarted HPPP$(D_2)$  & 28.34 & 26.75 & 27.02 & 29.71 &
			30.44 & 32.69 & 28.91 & 29.28 & 30.01\\
			\hline
			GraRED-HP$^3(D_3)$ & 29.51 & {\color{blue}27.09} &27.45 & 30.03 & 30.79 & 33.15 & 30.06 & 30.49 & 30.36\\
			\hline
			Restarted HPPP$(D_3)$ & {\color{red}29.81} & {\color{red}27.13} & {\color{blue}27.49} & {\color{blue}30.04} & {\color{blue}30.80} & 33.15 & {\color{red}30.09} & {\color{blue}30.53} & {\color{blue}30.37}\\
			\hline
	\end{tabular}}
\end{table}
\begin{figure}[htbp]
	\centering
	\includegraphics[width=1\linewidth]{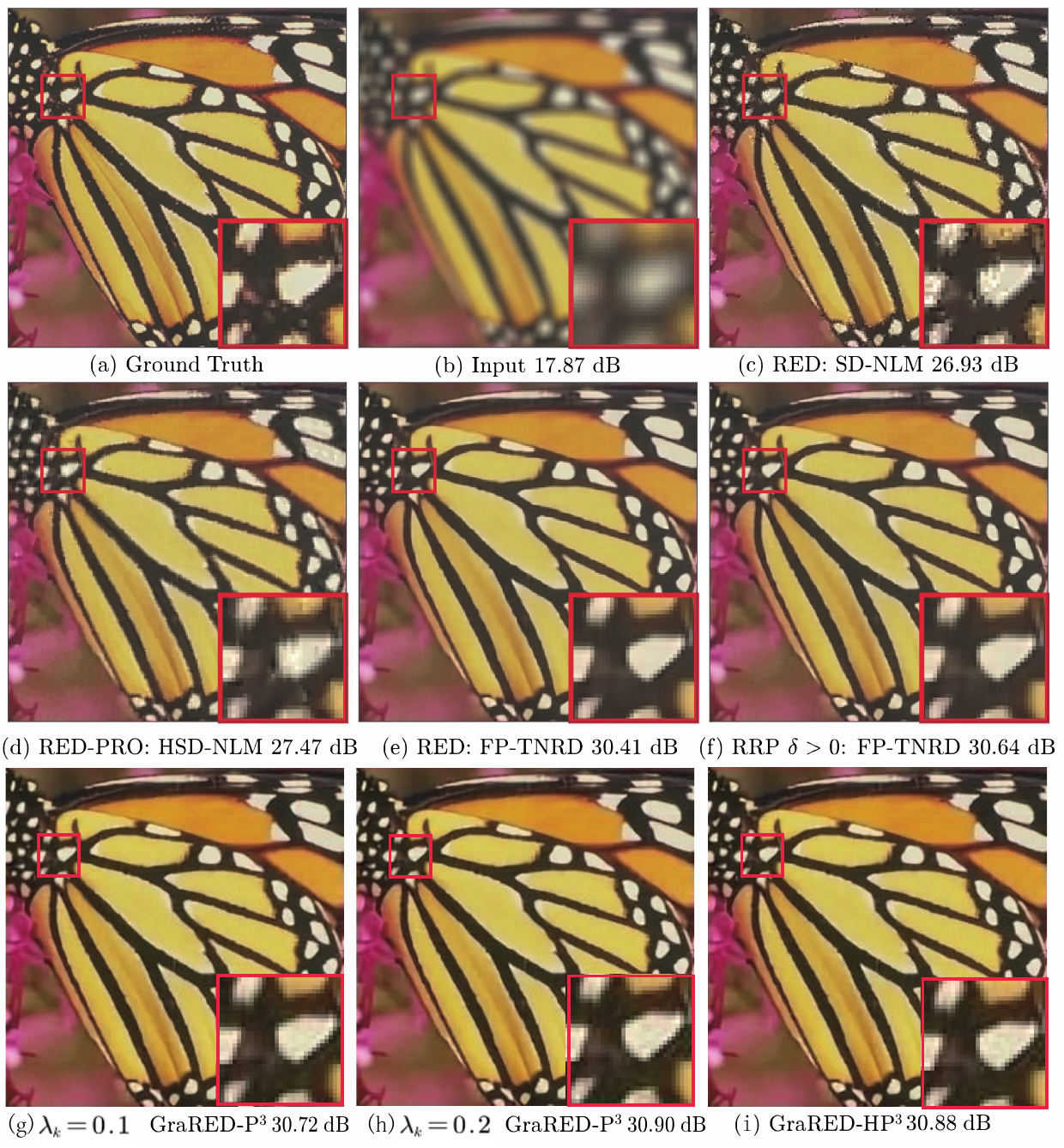}
	\caption{Deblurring results of 
		\textbf{Butterfly} degraded by the uniform kernel.}
	\label{fig: comparison with RED and REDPRO}
\end{figure} 
\subsection{Image inpainting}
In this section, we use the proposed algorithm to solve TV-regularized image inpainting problems and compare their numerical results with CP~\cite{Chambolle2011}, PPP~\cite{Bredies2022}, and HPPP algorithms. The discrete image inpainting model is 
$$
\min_{\bx\in \mathbb{R}^n} \lambda \left\| \mathbf{M}\odot \bx- \mathbf{y} \right\|_F^2+\beta \left\| \nabla \bx \right\|_1,
$$
where $ \mathbf{y} $ is the degraded image, $ \mathbf{M}$ is a mask, and $\lambda, \beta $ are balance coefficients. 

We test 10 common images for evaluation. The first $ \mathbf{M} $ is filled with a Bernoulli random mask, where each pixel is missing with probability $ p=0.5 $,  i.e., $ 50\%  $ of pixels are missed. The second $ \mathbf{M} $ is a character mask where about $ 19\% $ of pixels are missed. All the algorithms start with the degraded image. For classic algorithms, we fix the balance parameter $ \alpha = 0.01 $ and the total number $ N = 400 $ and use the following additional parameters:
\begin{itemize}
\item HPPP: $ \tau = s  = 1/\normm{K}  =  0.57,\bx_a = \mathbf{1} \in \mathbb{R}^{m\times n}, \by_a = 0\cdot \nabla \bx_a, \mu_k = \frac{1}{10(k+2)} $;
\item PPP: $ \tau = s  = 1/\normm{K}  =  0.57, \lambda_k= 1.6$, or $ \lambda_k= 1.2 $;
\item CP: $ \tau = s  = 1/\normm{K}  =  0.57 $.
\end{itemize}
Both GraRED-P$ ^3 $ and  GraRED-HP$ ^3 $ use DnCNN~\cite{ryu2019plug}, other parameters are used:
\begin{itemize}
\item GraRED-P$ ^3 $ : $ \tau = 10, s =0.1, \lambda_k = 0.2, \lambda = 5$;
\item GraRED-HP$ ^3 $ : $ \tau = 10, s =0.1, \mu_k = 0.05/(k+2), \lambda = 5, \bx_a = \by, \by_a = \mathbf{0}$.
\end{itemize}

In Tables~\ref{table: image inpainting PSNR about CP, PPP and HPPP} and~\ref{table: character missing image inpainting PSNR about CP, PPP and HPPP}, we compared the numerical performance of classic algorithms with TV regularization. In Figures~\ref{fig: inpainting random mask} and~\ref{fig: inpainting character mask}, we compare visualization results of \textbf{House} degraded by the Bernoulli random mask and the character mask; the proposed algorithms achieve better visual performance than TV regularization.  
\begin{table}[htbp]
	\centering
	\caption{Numerical results of image inpainting compared with CP, PPP, and HPPP (noise level $ \sigma = 2.55 $, Bernoulli random missing).}
	\label{table: image inpainting PSNR about CP, PPP and HPPP}
	\scalebox{0.8}{
		\begin{tabular}{|c|c|c|c|c|c|c|c|c|c|}
			\hline 
			& Cameraman & House & Peppers & Starfish & Butterfly & Craft & Parrots & Barbara & Boat \\
			\hline
			CP & 23.54 & 28.94 & 24.53 & 23.89 & 23.59 & 23.42 & 23.20  & 23.10 & 26.29 \\
			\hline
			PPP & {23.58} & 29.05 & {24.53 }& 24.32 & 23.76& 23.43 & {23.33}  & 23.13& 26.36\\
			\hline
			HPPP & {23.89} & {29.19} & {24.55} &{24.46} & {24.04} & {23.52} & {23.44 } & {23.35} & {26.45}\\
			\hline
			GraRED-P$ ^3 $ & \textbf{29.92} & \textbf{36.15} & {32.67} & \textbf{31.99} & \textbf{31.95} &\textbf{29.53} & \textbf{30.05} & \textbf{32.25} & \textbf{32.91} \\
			\hline
			GraRED-HP$ ^3 $ & 29.90 & 36.14 & \textbf{32.68} & {31.97} & {31.92} & 29.52 & 30.04  & {32.23} & {32.90} \\
			\hline
	\end{tabular}}
\end{table}
\begin{table}[htbp]
	\centering
	\caption{Numerical results of image inpainting compared with CP, PPP, and HPPP (noise level $ \sigma= 2.55 $, character texture missing).}
	\label{table: character missing image inpainting PSNR about CP, PPP and HPPP}
	\scalebox{0.8}{
		\begin{tabular}{|c|c|c|c|c|c|c|c|c|c|}
			\hline 
			& Cameraman & House & Peppers & Starfish & Butterfly & Craft & Parrots & Barbara & Boat \\
			\hline
			CP & 25.87 & 31.58 & 29.40 & 26.16 & 25.60 & 26.33 & 25.20  & 24.21 & 25.23 \\
			\hline
			PPP & {26.33} & 31.78 & {30.13 }& 26.86 & 26.08 & 26.47 & {25.64}  & 27.32& 28.05\\
			\hline
			HPPP & {26.33} & {32.00} & {30.02} & {26.88} & {26.38} & {26.51} & {25.83 } &{27.34} & {28.09}\\
			\hline
			GraRED-P$ ^3 $ & \textbf{29.18} & {37.46} & {33.79} & {30.89} & \textbf{29.45} &{29.20} & \textbf{26.04}  & \textbf{29.68} & {30.04} \\
			\hline
			GraRED-HP$ ^3 $ & 29.13 & \textbf{37.47 }& \textbf{33.80} & \textbf{30.93} & {29.42} & \textbf{29.20} & 26.02 &{29.60} & \textbf{30.04} \\
			\hline
	\end{tabular}}
\end{table}

\begin{figure}[htbp]
	\centering
	\captionsetup[subfigure]{justification=centering}
	\begin{subfigure}[b]{.22\linewidth}
		\centering
		\begin{tikzpicture}[spy using outlines={rectangle,blue,magnification=5,size=1.2cm, connect spies}]
		\node {\includegraphics[height=3cm]{./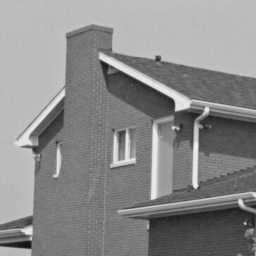}};
		\spy on (0.25,-0.6) in node [left] at (1.5,.9);
		\end{tikzpicture}
		\caption{Clean image  \\ ~}
	\end{subfigure}
	\begin{subfigure}[b]{.22\linewidth}
		\centering
		\begin{tikzpicture}[spy using outlines={rectangle,blue,magnification=5,size=1.2cm, connect spies}]
		\node {\includegraphics[height=3cm]{./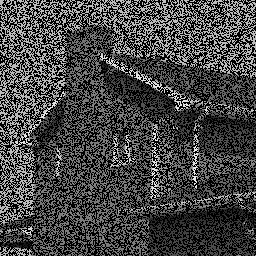}};
		\spy on (0.25,-0.6) in node [left] at (1.5,.9);
		\end{tikzpicture}
		\caption{Degraded image \\ ~}
	\end{subfigure}
	\begin{subfigure}[b]{.22\linewidth}
		\centering
		\begin{tikzpicture}[spy using outlines={rectangle,blue,magnification=5,size=1.2cm, connect spies}]
		\node {\includegraphics[height=3cm]{./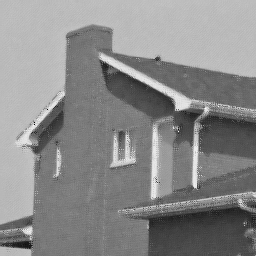}};
		\spy on (0.25,-0.6) in node [left] at (1.5,.9);
		\end{tikzpicture}
		\caption{CP ($28.94$dB) \\ ~}
	\end{subfigure}
	\begin{subfigure}[b]{.22\linewidth}
		\centering
		\begin{tikzpicture}[spy using outlines={rectangle,blue,magnification=5,size=1.2cm, connect spies}]
		\node {\includegraphics[height=3cm]{./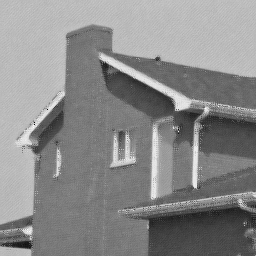}};
		\spy on (0.25,-0.6) in node [left] at  (1.5,.9);
		\end{tikzpicture}
		\caption{$ \lambda_k = 1.6 $: PPP ($29.06$dB)}
	\end{subfigure}
	\begin{subfigure}[b]{.22\linewidth}
		\centering
		\begin{tikzpicture}[spy using outlines={rectangle,blue,magnification=5,size=1.2cm, connect spies}]
		\node {\includegraphics[height=3cm]{./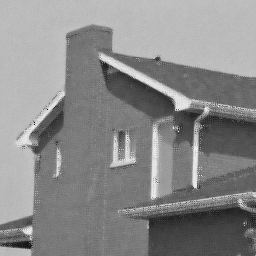}};
		\spy on (0.25,-0.6) in node [left] at (1.5,.9);
		\end{tikzpicture}
		\caption{$ \lambda_k = 1.2 $: PPP ($29.09$dB)}
	\end{subfigure}
	\begin{subfigure}[b]{.22\linewidth}
		\centering
		\begin{tikzpicture}[spy using outlines={rectangle,blue,magnification=5,size=1.2cm, connect spies}]
		\node {\includegraphics[height=3cm]{./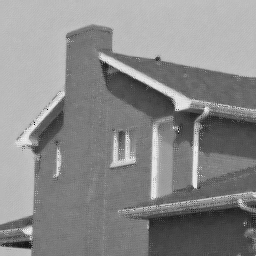}};
		\spy on (0.25,-0.6) in node [left] at  (1.5,.9);
		\end{tikzpicture}
		\caption{HPPP ($29.20$dB) \\ ~}
	\end{subfigure}
	\begin{subfigure}[b]{.22\linewidth}
		\centering
		\begin{tikzpicture}[spy using outlines={rectangle,blue,magnification=5,size=1.2cm, connect spies}]
		\node {\includegraphics[height=3cm]{./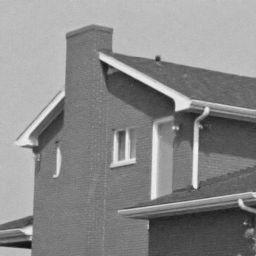}};
		\spy on (0.25,-0.6) in node [left] at  (1.5,.9);
		\end{tikzpicture}
		\caption{GraRED-P$ ^3 $ ($36.15$dB)}
	\end{subfigure}
	\begin{subfigure}[b]{.22\linewidth}
		\centering
		\begin{tikzpicture}[spy using outlines={rectangle,blue,magnification=5,size=1.2cm, connect spies}]
		\node {\includegraphics[height=3cm]{./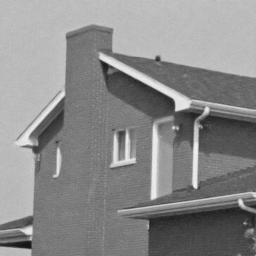}};
		\spy on (0.25,-0.6) in node [left] at  (1.5,.9);
		\end{tikzpicture}
		\caption{GraRED-HP$ ^3 $ ($36.14$dB)}
	\end{subfigure}
\caption{Recovery results of \textbf{House} degraded by the random mask.}
\label{fig: inpainting random mask}
\end{figure}

\begin{figure}[htbp]
    \centering
	% \captionsetup{justification=centering, labelformat=empty}
    % \captionsetup{justification=centering}
	\captionsetup[subfigure]{labelformat=parens, labelsep=space, justification=centering}
    \begin{minipage}[b]{.22\linewidth}
        \centering
        \begin{tikzpicture}[spy using outlines={rectangle,blue,magnification=5,size=1.2cm, connect spies}]
        \node {\includegraphics[height=3cm]{./figures/CP_P3_HP3_GraRED_P3_GraRED_HP3/inpainting/02.png}};
        \spy on (0.86,-0.16) in node [left] at (1.5,.9);
        \end{tikzpicture}
        \subcaption{Clean image  \\ ~}
    \end{minipage}
    \begin{minipage}[b]{.22\linewidth}
        \centering
        \begin{tikzpicture}[spy using outlines={rectangle,blue,magnification=5,size=1.2cm, connect spies}]
        \node {\includegraphics[height=3cm]{./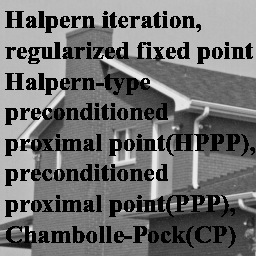}};
        \spy on (0.86,-0.16) in node [left] at (1.5,.9);
        \end{tikzpicture}
        \subcaption{Degraded image \\ ~}
    \end{minipage}
    \begin{minipage}[b]{.22\linewidth}
        \centering
        \begin{tikzpicture}[spy using outlines={rectangle,blue,magnification=5,size=1.2cm, connect spies}]
        \node {\includegraphics[height=3cm]{./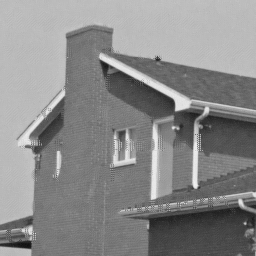}};
        \spy on (0.86,-0.16) in node [left] at (1.5,.9);
        \end{tikzpicture}
        \subcaption{CP ($31.58$dB) \\ ~}
    \end{minipage}
    \begin{minipage}[b]{.22\linewidth}
        \centering
        \begin{tikzpicture}[spy using outlines={rectangle,blue,magnification=5,size=1.2cm, connect spies}]
        \node {\includegraphics[height=3cm]{./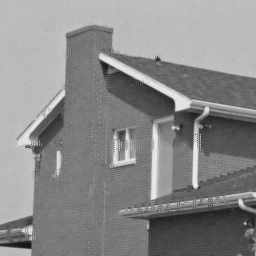}};
        \spy on (0.86,-0.16) in node [left] at  (1.5,.9);
        \end{tikzpicture}
        \subcaption{$ \lambda_k = 1.6 $: PPP ($31.79$dB)}
    \end{minipage}
    \begin{minipage}[b]{.22\linewidth}
        \centering
        \begin{tikzpicture}[spy using outlines={rectangle,blue,magnification=5,size=1.2cm, connect spies}]
        \node {\includegraphics[height=3cm]{./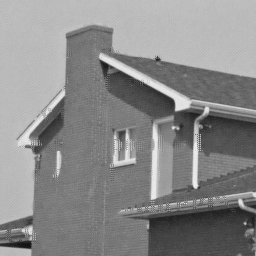}};
        \spy on (0.86,-0.16) in node [left] at (1.5,.9);
        \end{tikzpicture}
        \subcaption{$ \lambda_k = 1.2 $: PPP ($31.74$dB)}
    \end{minipage}
    \begin{minipage}[b]{.22\linewidth}
        \centering
        \begin{tikzpicture}[spy using outlines={rectangle,blue,magnification=5,size=1.2cm, connect spies}]
        \node {\includegraphics[height=3cm]{./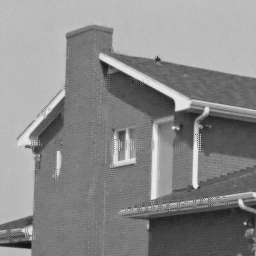}};
        \spy on (0.86,-0.16) in node [left] at  (1.5,.9);
        \end{tikzpicture}
        \subcaption{HPPP ($32.00$dB) \\ ~ }
    \end{minipage}
    \begin{minipage}[b]{.22\linewidth}
        \centering
        \begin{tikzpicture}[spy using outlines={rectangle,blue,magnification=5,size=1.2cm, connect spies}]
        \node {\includegraphics[height=3cm]{./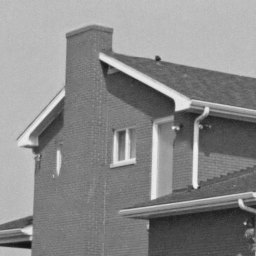}};
        \spy on (0.86,-0.16) in node [left] at  (1.5,.9);
        \end{tikzpicture}
        \subcaption{GraRED-P$ ^3 $ ($37.46$dB)}
    \end{minipage}
    \begin{minipage}[b]{.22\linewidth}
        \centering
        \begin{tikzpicture}[spy using outlines={rectangle,blue,magnification=5,size=1.2cm, connect spies}]
        \node {\includegraphics[height=3cm]{./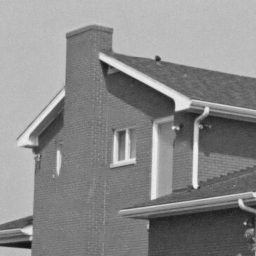}};
        \spy on (0.86,-0.16) in node [left] at  (1.5,.9);
        \end{tikzpicture}
        \subcaption{GraRED-HP$ ^3 $ ($37.47$dB)}
    \end{minipage}
\caption{Recovery results of \textbf{House} degraded by the character mask.}
\label{fig: inpainting character mask}
\end{figure}
\section{Conclusions}
\label{sec:conclusions}
This paper introduces a Halpern-type PPP (HPPP) algorithm, which has the advantage of strong convergence over PPP. In addition, GraRED-HP$^3$ is proposed by integrating HPPP with denoiser priors for IR problems, which is an accelerated PnP-ADMM. Advantages of HPPP are demonstrated through several toy examples, and GraRED-HP$^3$ achieves state-of-the-art performance on IR experiments. In the future, we plan to extend the definition of $ \mathcal{M}$-monotonicity to $ \mathcal{M}$-comonotonicity for nonconvex case~\cite{Bauschke2021}. 
\section*{Acknowledgments}
The authors thank Prof. Defeng Sun for bringing the interesting work~\cite{Sun2024} to our attention and Mr. Feng Xue for his insightful discussions. We sincerely appreciate Associate Editor Michael Elad and the anonymous referees for their thorough review and valuable comments, which have substantially improved this paper.
\appendix
\section{Boundedness}
\label{sec:main}
First, we study the boundedness and asymptotic regularity of $ \{\bu^k\}_{k\in \mathbb{N}} $ generated by~\eqref{eq: HPPP}. To further establish regularity of $ \{\bu^k\}_{k\in \mathbb{N}} $ (Theorem~\ref{thm:bigthm}), we introduce an important lemma from~\cite[Lemma 2.5]{2002Iterative}.
\begin{lemma}\label{lemma for convergence}
	Let $ \{a_k\}_{k\in\mathbb{N}} $be a sequence of non-negative real numbers satisfying
	\begin{equation}
	a_{k+1}\leq (1-\mu_k)a_k+\mu_k \beta_k + \gamma_k,
	\end{equation}
	where $ \{\mu_k\}_{k\in\mathbb{N}}, \{\beta_k\}_{k\in\mathbb{N}}, \{\gamma_k\}_{k\in\mathbb{N}}$ satisfies the following conditions:
	\begin{enumerate}[label={(\roman*)},leftmargin=1cm]
		\item $\{ \mu_k \} $  converges to 0 in $ [0,1] $, and $\sum_{k=0}^\infty \mu_k = +\infty $, or equivalently, $ \prod_{k=0}^\infty  (1-\mu_k)= 0 $;
		\item $\lim\sup_{k\to \infty}\beta_k \leq 0 $;
		\item $\gamma_k \geq 0, \sum_{k=0}^\infty\gamma_k <\infty $.
	\end{enumerate}
	Then $\lim_{k\to \infty}a_k=0$.
\end{lemma}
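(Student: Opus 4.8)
The statement is the classical Xu-type lemma for quasi-Fej\'er recursions, and the plan is to prove it by unrolling the recursion after absorbing the $\beta$-term into an arbitrarily small constant. First, fix $\varepsilon>0$. Since $\limsup_k\beta_k\le0$ by condition (ii), there is an index $N_0=N_0(\varepsilon)$, which may be chosen as large as we wish, such that $\beta_k\le\varepsilon$ for every $k\ge N_0$; as $\mu_k\ge0$, the hypothesis then reduces to $a_{k+1}\le(1-\mu_k)a_k+\mu_k\varepsilon+\gamma_k$ for all $k\ge N_0$.

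Next, I would iterate this inequality from $N_0$ up to an arbitrary $m>N_0$, which gives
\[
a_m\le\Big(\prod_{j=N_0}^{m-1}(1-\mu_j)\Big)a_{N_0}+\varepsilon\sum_{j=N_0}^{m-1}\mu_j\prod_{i=j+1}^{m-1}(1-\mu_i)+\sum_{j=N_0}^{m-1}\gamma_j\prod_{i=j+1}^{m-1}(1-\mu_i).
\]
Setting $P_j=\prod_{i=j}^{m-1}(1-\mu_i)$ (with the empty product $P_m=1$) and using $\mu_jP_{j+1}=P_{j+1}-P_j$, the middle sum telescopes to $1-P_{N_0}\le1$; in the last sum each product is $\le1$, so it is bounded by $\rho_{N_0}:=\sum_{j\ge N_0}\gamma_j$, which is finite by condition (iii). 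Hence
\[
a_m\le\Big(\prod_{j=N_0}^{m-1}(1-\mu_j)\Big)a_{N_0}+\varepsilon+\rho_{N_0}.
\]

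Then I would pass to the limit twice, in the right order. Keeping $N_0$ fixed and letting $m\to\infty$: condition (i) gives $\sum_j\mu_j=+\infty$, so from $1-t\le e^{-t}$ on $[0,1]$ we obtain $\prod_{j=N_0}^{m-1}(1-\mu_j)\le\exp\big(-\sum_{j=N_0}^{m-1}\mu_j\big)\to0$ (the degenerate case $\mu_j=1$ being trivial), whence $\limsup_{m\to\infty}a_m\le\varepsilon+\rho_{N_0}$. Now letting $N_0\to\infty$, the tail $\rho_{N_0}\to0$ because $\sum_j\gamma_j<\infty$, and since $N_0$ was free to be taken arbitrarily large, we conclude $\limsup_{m\to\infty}a_m\le\varepsilon$; as $\varepsilon>0$ is arbitrary and $a_m\ge0$, this forces $a_m\to0$.

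Nothing here is deep, but there are two care-points that I expect to be the main (and only) obstacles. The first is performing the two limit passages in the correct order: one must send $m\to\infty$ \emph{before} $N_0\to\infty$, since the control on $\big(\prod_{j=N_0}^{m-1}(1-\mu_j)\big)a_{N_0}$ is only uniform in $m$ and is lost if $N_0$ moves first. The second is the elementary fact that $\mu_j\in[0,1]$ with $\sum_j\mu_j=+\infty$ forces $\prod_j(1-\mu_j)=0$, which also justifies the equivalence asserted in condition (i). An alternative, essentially equivalent, route is Maing\'e's lemma — argue along a subsequence $(a_{k_n})$ realizing $\limsup_k a_k$ — but the direct unrolling above is the most transparent.
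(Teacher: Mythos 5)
Your proof is correct. The paper itself gives no argument for this lemma---it is quoted verbatim as Lemma~2.5 of the cited reference \cite{2002Iterative} (Xu's lemma)---and your unrolling argument (absorb $\beta_k$ into an arbitrary $\varepsilon$ beyond some index $N_0$, telescope the $\mu$-terms via $\mu_j P_{j+1}=P_{j+1}-P_j$, bound the $\gamma$-tail by its convergent remainder, then let $m\to\infty$ before $N_0\to\infty$) is exactly the standard proof of that cited result, with the two delicate points (order of limits, and $\sum_j\mu_j=+\infty\Rightarrow\prod_j(1-\mu_j)=0$) handled correctly.
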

% \vspace{4em}
\begin{lemma}\label{lemma: lemma for regularized fixed point}
	Let $ \mathcal{A}:\mathcal{H}\to 2^\mathcal{H}$ be an operator with $\mathrm{zer} \mathcal{A} \neq \emptyset$, and let $\mathcal{M}$ be an admissible preconditioner such that $\mathcal{M}^{-1}\mathcal{A}$ is $\mathcal{M}$-monotone and $(\mathcal{M}+\mathcal{A})^{-1}$ is $L$-Lipschitz. Let $\{\bu^k \}_{k\in \mathbb{N}}$ be the sequence generated by~\eqref{eq: HPPP}. Then the following
	assertions are hold:
	\begin{enumerate}[label={(\roman*)},leftmargin=1cm]
		\item The sequences $\{\bu^k \}_{k\in\mathbb{N}}, \{\mathcal{T}\bu^k\}_{k\in\mathbb{N}}$ are bounded;
		\item If the sequence $\{ \mu_k \}_{k\in \mathbb{N}}$ satisfies $\lim_{k\to \infty}\frac{\mu_{k+1}-\mu_k}{\mu_k} = 0 $ or $\sum_{k\in \mathbb{N}}\left\vert \mu_{k+1}-\mu_k \right\vert<\infty$, then $\lim_{ k\to \infty} \left\| \mathcal{T}\bu^k-\bu^k \right\|_\mathcal{M}=  0$;
		\item If the conditions of (ii) hold, then $\lim_{ k\to \infty}\left\| \bu^{k+1}-\bu^k \right\|= 0, \lim_{ k\to \infty}\left\| \mathcal{T}\bu^k-\bu^k \right\|= 0$.
	\end{enumerate}
\end{lemma}
% \vspace{4em}
\begin{proof}
	Let $\mathcal{M} =\mathcal{C}\mathcal{C}^*$ be a decomposition of $\mathcal{M}$~\cite[Proposition 2.3]{Bredies2022}. Denote 
	\begin{equation}\label{constant C}
		C = L\left\| \mathcal{C} \right\|,
	\end{equation}
	for $\bu', \bu''\in \mathcal{H}$, we have 
	% \begin{equation}
		\begin{align}
			\left\| \mathcal{T}\bu'-\mathcal{T}\bu'' \right\|
			&=\left\| (\mathcal{M}+\mathcal{A})^{-1}\mathcal{C}\mathcal{C}^*\bu'-(\mathcal{M}+\mathcal{A})^{-1}\mathcal{C}\mathcal{C}^*\bu'' \right\| \notag\\
			&=\left\| (\mathcal{M}+\mathcal{A})^{-1}\mathcal{C}(\mathcal{C}^*\bu'-\mathcal{C}^*\bu'') \right\| \notag\\
			&\leq L\left\| \mathcal{C} \right\| \left\| \mathcal{C}^*(\bu'-\bu'') \right\| \notag\\
			& = L\left\| \mathcal{C} \right\|\left\| \bu'-\bu''  \right\|_\mathcal{M} \notag\\
			& = C\left\|\bu'-\bu'' \right\|_\mathcal{M},\label{ineq: boundedness by M norm}
			\end{align}
	% \end{equation}
	where the third inequality follows from the $L$-Lipschitz property of $(\mathcal{M}+\mathcal{A})^{-1}$, and the fourth equality follows from the definition of $\left\| \mathbf{u} \right\|_{\mathcal{M}}$, i.e., $\left\| \mathbf{u} \right\|_{\mathcal{M}} = \sqrt{\langle \mathcal{M} \mathbf{u},\mathbf{u}\rangle} =\left\| \mathcal{C}^* \mathbf{u} \right\|$ for any $\mathbf{u}\in \mathcal{H}$.

	First, we prove $ \{ \mathbf{u}^k \}_{k\in \mathbb{N}} $ is bounded. For any $ \bu^*\in \mathrm{Fix}(\mathcal{T})$, we have
	\begin{align*}
	\left\lVert \bu^{k+1} -\bu^*\right\rVert_{\mathcal{M}} &= \left\| \mu_{k+1}(\mathbf{a}-\bu^*)+(1-\mu_{k+1})(\mathcal{T}\bu^k-\bu^*)\right\|_{\mathcal{M}} \\
	& \leq \mu_{k+1}\left\| \mathbf{a}-\bu^* \right\|_\mathcal{M}+(1-\mu_{k+1})\left\| \bu^k-\bu^* \right\|_\mathcal{M} \\
	& \leq \max \{ \left\| \mathbf{a}-\bu^* \right\|_\mathcal{M},\left\| \bu^k-\bu^* \right\|_\mathcal{M}\}\\
	& \leq \dots \leq \max\{\left\| \mathbf{a}-\bu^* \right\|_\mathcal{M},\left\| \bu^0-\bu^* \right\|_\mathcal{M}\}.
	\end{align*}
	Denote 
\begin{equation}\label{constant C1}
	C_1 = \max\{\left\| \mathbf{a}-\bu^* \right\|_\mathcal{M},\left\| \bu^0-\bu^* \right\|_\mathcal{M}\},
\end{equation}
then we have $ \left\| \bu^k-\bu^* \right\|_\mathcal{M} \leq C_1 $ for all $ k\in \mathbb{N}$.

	Furthermore, 
	\begin{displaymath}
	\begin{aligned}
	\left\| \bu^{k+1}-\bu^* \right\| & \leq \mu_{k+1}\left\| \mathbf{a}-\bu^* \right\|+(1-\mu_{k+1})\left\| \mathcal{T}\bu^k-\bu^* \right\| \\
	&\leq \mu_{k+1}\left\| \mathbf{a}-\bu^* \right\|+(1-\mu_{k+1})\cdot C \left\| \bu^k-\bu^* \right\|_\mathcal{M} \\
	& \leq \max\{ \left\| \mathbf{a}-\bu^* \right\|, CC_1\}<+\infty .
	\end{aligned}
	\end{displaymath}
	By~\eqref{ineq: boundedness by M norm}, we have
	\begin{equation}\label{ineq: boundedness of Tuk-a}
	\left\| \mathcal{T}\bu^k-\bu^* \right\|\leq C\left\| \bu^k-\bu^* \right\|_\mathcal{M}\leq CC_1<+\infty.  
	\end{equation}
	So the sequence $\{ \mathcal{T}\bu^k \}_{k\in \mathbb{N}}$ is also bounded.
	
	For (ii), we will prove $\lim_{k\to\infty}\normm{\mathcal{T}\bu^k-\bu^k}_{\mathcal{M}} = 0$. Let $M = \left\| \mathbf{a}-\mathbf{u}^* \right\|_{\mathcal{M}}+C_1 $, by the triangle inequality and the $\mathcal{M}$-non-expansiveness of $\mathcal{T}$, we have
	\begin{align}
	% \begin{aligned}
	\left\| \mathbf{a}-\mathcal{T}\bu^k \right\|_\mathcal{M}
	&\leq \left\| \mathbf{a}-\mathbf{u}^* \right\|_{\mathcal{M}}+\left\| \mathbf{u}^*-\mathcal{T}\bu^k \right\|_{\mathcal{M}} \notag\\
	&\leq \left\| \mathbf{a}-\mathbf{u}^* \right\|_{\mathcal{M}}+\left\| \mathbf{u}^k-\mathbf{u}^* \right\|_\mathcal{M} \notag\\
	&\leq \left\| \mathbf{a}-\mathbf{u}^* \right\|_{\mathcal{M}}+C_1 = M \label{eq: boundedness of Tuk-a}.
	% \end{aligned}
	\end{align}

	Note that by $\mathbf{u}^{k+1} = \mu_{k+1} \mathbf{a}+(1-\mu_{k+1})\mathcal{T}\mathbf{u}^k$, we have
	% \begin{equation}
	\begin{align}
	\left\| \bu^{k+1}-\bu^k \right\|_\mathcal{M} & = \left\| (\mu_{k+1}-\mu_k)(\mathbf{a}-\mathcal{T}\bu^{k-1})+(1-\mu_{k+1})(\mathcal{T}\bu^k-\mathcal{T}\bu^{k-1}) \right\|_\mathcal{M}\notag \\
	& \leq (1-\mu_{k+1})\left\| \mathcal{T}u^k-\mathcal{T}u^{k-1} \right\|_\mathcal{M}+\left\vert \mu_{k+1}-\mu_k \right\vert\left\| \mathbf{a}-\mathcal{T}\bu^{k-1} \right\|_\mathcal{M}\notag\\
	&\leq (1-\mu_{k+1})\left\| \bu^k-\bu^{k-1} \right\|_\mathcal{M}+M \left\vert \mu_{k+1}-\mu_k \right\vert,\label{ineq: convergence of M uk+1-uk}
	\end{align}
	% \end{equation}
	where the last inequality follows from Lemma~\ref{lemma: MFNE} and~\eqref{eq: boundedness of Tuk-a}.

	Applying Lemma~\ref{lemma for convergence} together with $\lim_{k\to \infty}\frac{\mu_{k+1}-\mu_k}{\mu_k} = 0 $ or $\sum_{k\in \mathbb{N}}\left\vert \mu_{k+1}-\mu_k \right\vert<\infty$ to~\eqref{ineq: convergence of M uk+1-uk}, we have $ \lim_{k\to \infty}\left\| \bu^{k+1}-\bu^k \right\|_\mathcal{M} = 0$. Since $\mathbf{u}^{k+1}-\mathbf{u}^k = \mu_{k+1}(\mathbf{a}-\mathcal{T}\mathbf{u}^k)$, we have
	% \begin{equation}
	\begin{align}
	\left\| \bu^k-\mathcal{T}\bu^k \right\|_\mathcal{M} & = \left\| (\bu^k-\bu^{k+1})+(\bu^{k+1}-\mathcal{T}{\mathbf{u}^k}) \right\|_\mathcal{M} \notag\\
	&\leq \left\| \bu^k-\bu^{k+1} \right\|_\mathcal{M}+\mu_{k+1}\left\| \mathbf{a}-\mathcal{T}\bu^k \right\|_\mathcal{M}.\label{eq:uk-Tuk}
	\end{align}
	% \end{equation}
	Using~\eqref{eq: boundedness of Tuk-a} and $\lim_{k\to \infty}\mu_k = 0$ in~\eqref{eq:uk-Tuk}, we can obtain $\lim_{k\to \infty} \left\| \bu^k-\mathcal{T}\bu^k \right\|_\mathcal{M}= 0$.
	
For (iii), let $M'=\left\| \mathbf{a}-\mathbf{u}^* \right\| +CC_1$, by the triangle inequality and~\eqref{ineq: boundedness of Tuk-a}, we have 
\begin{equation}\label{ineq: boundedness of a-Tuk l2 norm}
	\left\| \mathbf{a}-\mathcal{T}\bu^{k} \right\|\leq \left\| \mathbf{a}-\mathbf{u}^* \right\| +\left\| \mathbf{u}^*- \mathcal{T}\bu^{k}\right\|\leq \left\| \mathbf{a}-\mathbf{u}^* \right\| +CC_1 = M'.
\end{equation}

Similar to the proof of~\eqref{ineq: convergence of M uk+1-uk}, we have
	% \begin{equation}
		\begin{align}
			\left\| \bu^{k+1}-\bu^k \right\| &\leq (1-\mu_{k+1})\left\| \mathcal{T}\bu^k-\mathcal{T}\bu^{k-1} \right\|+\left\vert \mu_{k+1}-\mu_k \right\vert \left\| \mathbf{a}-\mathcal{T}\bu^{k-1} \right\| \notag\\
			& \leq C(1-\mu_{k+1})\left\| \bu^k-\bu^{k-1} \right\|_\mathcal{M}+M'\left\vert \mu_{k+1}-\mu_k \right\vert, \label{ineq: convergence of uk+1-uk}
			\end{align}
	% \end{equation}
	where the last inequality follows from~\eqref{ineq: boundedness by M norm}.
	Applying~\eqref{ineq: convergence of M uk+1-uk} and $\lim_{k\to\infty}\mu_k = 0$ to the above inequality~\eqref{ineq: convergence of uk+1-uk}, we have $\lim_{k\to\infty}\left\| \bu^{k+1}-\bu^{k} \right\|= 0$. Similar to the proof of~\eqref{eq:uk-Tuk},
	\begin{displaymath}
	\left\| \bu^k-\mathcal{T}\bu^k \right\|\leq \left\| \bu^k-\bu^{k+1} \right\|+\mu_{k+1}\left\| \mathbf{a}-\mathcal{T}\bu^k \right\|.
	\end{displaymath}
	By $\lim_{k\to\infty}\left\| \bu^{k+1}-\bu^{k} \right\|= 0, \lim_{k\to \infty}\mu_k = 0$ and boundedness of $\{ \mathcal{T} \mathbf{u}^k \}_{k\in \mathbb{N}}$, we finally prove $\lim_{k\to \infty}\left\| \bu^k-\mathcal{T}\bu^k \right\|= 0$.
\end{proof}
\section{$\mathcal{M}$-projection}
\begin{lemma}\label{lemma: generalized VI}
	Let $ \mathcal{A}:\mathcal{H}\to 2^\mathcal{H}$ be an operator with $\mathrm{zer} \mathcal{A} \neq \emptyset$, and let $\mathcal{M}$ be an admissible preconditioner such that $\mathcal{M}^{-1}\mathcal{A}$ is $\mathcal{M}$-monotone and $(\mathcal{M}+\mathcal{A})^{-1}$ is $L$-Lipschitz. Then there exists a unique solution ${\mathbf{u}}^*=\arg\min_{\bu\in \mathrm{Fix}(\mathcal{T})}\left\| \bu-\mathbf{a} \right\|_\mathcal{M}^2$ which solves:
	\begin{equation*}%\label{ineq:final VI for M projection}
	\left\langle \bu^*-\mathbf{a}, \bu-\bu^*\right\rangle_ \mathcal{M}\geq 0\quad \forall \bu\in \mathrm{Fix}(\mathcal{T}).
	\end{equation*}
\end{lemma}
\begin{proof}
	Since $l(\bu) = \frac{1}{2} \left\| \bu-\mathbf{a}\right\|_\mathcal{M}^2$ is a proper lower-semicontinuous differentiable convex function, assume that $\mathbf{u}^*$ is the optimal solution of $\min_{\mathbf{u}\in \mathrm{Fix} (\mathcal{T})}l(\mathbf{u})$. Since $\mathrm{Fix}( \mathcal{T}) =\mathrm{zer}\mathcal{A}$ is the convex set, thus $\bu^*+t(\bu-\bu^*)\in \mathrm{Fix} (\mathcal{T})$ for any $ \bu\in \mathrm{Fix}( \mathcal{T} )$ and $t\in (0,1)$. Hence, 
	% \begin{equation}
		\begin{align}
			\lim_{t\to 0}\frac{l(\bu^*+t(\bu-\bu^*))-l(\bu^*)}{t} & = \left\langle l'(\bu^*), \bu-\bu^*\right\rangle  \notag\\
			& = \left\langle \mathcal{M}(\bu^*-\mathbf{a}),\bu-\bu^*\right\rangle \notag\\
			& = \left\langle \bu^*-\mathbf{a}, \bu-\bu^*\right\rangle _\mathcal{M}\geq 0.\label{ineq: VI1 for lu}
			\end{align}
	% \end{equation}
	If $\bu^{**}$ is the another solution, then it also satisfies
	\begin{equation}\label{ineq: VI2 for lu}
		\left\langle \bu^{**}-{\mathbf{a}}, \bu-\bu^{**}\right\rangle _\mathcal{M}\geq 0.
	\end{equation}
	Replace $\bu$ with $\bu^{**}, \bu^*$ in~\eqref{ineq: VI1 for lu} and~\eqref{ineq: VI2 for lu}, respectively, then
	\begin{align*}
	& \left\langle \bu^*-\mathbf{a},  \bu^{**}-\bu^*\right\rangle _\mathcal{M}\geq 0,\\
	& \left\langle \bu^{**}-\mathbf{a}, \bu^*-\bu^{**}\right\rangle _\mathcal{M}\geq 0.
	\end{align*}
	Add the above two inequalities, we then obtain $\left\| \bu^*-\bu^{**} \right\|_ \mathcal{M} = 0$. By~\eqref{ineq: boundedness by M norm}, we have 
	\begin{displaymath}
	\left\| \bu^*-\bu^{**} \right\|=\left\| \mathcal{T}\bu^*-\mathcal{T}\bu^{**} \right\|\leq C\left\| \bu^*-\bu^{**} \right\|_\mathcal{M}=0.
	\end{displaymath}
	It follows $\bu^* = \bu^{**}$, which completes the proof of Lemma~\ref{lemma: generalized VI}.
\end{proof}

As mentioned in~\cite{Bauschke2023,BUI2020124315}, we can introduce the following notion of $ \mathcal{M}$-projection.
\begin{definition}[$ \mathcal{M}$-projection]\label{def: M-projection}
	Assume $\mathbf{a}\in \mathcal{H} $ and that there exists a unique point $ \bu^* \in \mathrm{Fix}(\mathcal{T}) $ such that $ \normm{\bu^*-\mathbf{a}}_\mathcal{M}\leq \normm{\bu-\mathbf{a}}_\mathcal{M}$ for any $\bu  \in  \mathrm{Fix}(\mathcal{T})$, then $ \bu^* $ is called the $  \mathcal{M}$-projection of $ \mathbf{a} $ onto $ \mathrm{Fix}(\mathcal{T})$, denoted by $ P_{\mathrm{Fix}(\mathcal{T})} ^\mathcal{M}(\mathbf{a})$.
\end{definition}

The following Lemma~\ref{ineq: generalized VI} extends the variational inequality of the metric projection. 
\begin{lemma}
    \label{ineq: generalized VI}
	Let $ \mathcal{A}:\mathcal{H}\to 2^\mathcal{H}$ be an operator with $\mathrm{zer} \mathcal{A} \neq \emptyset$, and let $\mathcal{M}$ be an admissible preconditioner such that $\mathcal{M}^{-1}\mathcal{A}$ is $\mathcal{M}$-monotone and $(\mathcal{M}+\mathcal{A})^{-1}$ is $L$-Lipschitz. Then the following assertions are equivalent:
	\begin{enumerate}[label={(\roman*)},leftmargin=1cm]
		\item $ \bu^* = P_{\mathrm{Fix}(\mathcal{T})} ^\mathcal{M}(\mathbf{a})$;
		\item $ \left\langle \bu^*-\mathbf{a}, \bu-\bu^*\right\rangle_ \mathcal{M}\geq 0$ for any $ \bu\in \mathrm{Fix}(\mathcal{T}) $.
	\end{enumerate}
\end{lemma}
\begin{proof}
	See Lemma~\ref{lemma: generalized VI} and Definition~\ref{def: M-projection}.
\end{proof}
% \newpage
\bibliographystyle{siamplain}
\bibliography{references.bib}
\end{document}